\documentclass[12pt]{article}

% Graphics
\usepackage{graphicx}
\usepackage{epsfig}
\usepackage{psfrag}
\usepackage{wrapfig}
\usepackage[all]{xy}
\usepackage{pa}

% Author
%\usepackage[blocks]{authblk}
%\renewcommand\Affilfont{\small}
%\setlength{\affilsep}{0em}

% Document formatting
\usepackage[margin=1in]{geometry}
\usepackage{setspace}
\usepackage{longtable}
\usepackage{mathptmx}

% Header
\usepackage{fancyhdr}
\pagestyle{fancy}
\chead{Retrospective Causal Inference}
\lhead{}
\rhead{}
\setlength{\headsep}{.25in}

% Formatting shortcuts

\newcommand{\bi}{\begin{itemize}}
\newcommand{\ei}{\end{itemize}}

\newcommand{\specialcell}[2][l]{%
  \begin{tabular}[#1]{@{}l@{}}#2\end{tabular}}

% Word count
\newcommand\wordcount{
    \immediate\write18{texcount -sum -1 \jobname.tex > 'count.txt'}
\input{count.txt}words}

% Bibliography formatting
\bibliographystyle{apsr}
\usepackage{natbib}

% Equation formatting
\usepackage{amsmath}
\usepackage{amsfonts}
\usepackage{amsthm}
\usepackage{bbm}
\usepackage{array}

\newtheorem{prop}{Proposition}

\newtheorem{assn}{Assumption}

% Define new characters
\def\Var{{\rm Var}\,}
\def\E{{\rm E}\,}
\def\arg{{\rm arg}\,}

\def\N{{\rm N}\,}
\newcommand\independent{\protect\mathpalette{\protect\independenT}{\perp}}
\def\independenT#1#2{\mathrel{\rlap{$#1#2$}\mkern2mu{#1#2}}}

%--------------------------------------------------------------------------
% Math boldface shortcuts, etc. ----------------------------------
%--------------------------------------------------------------------------

\newcommand{\nn}{\nonumber}

\title{Retrospective Causal Inference with Machine Learning Ensembles: An Application to Anti-Recidivism Policies in Colombia}

\author{Cyrus Samii, Laura Paler, and Sarah Zukerman Daly\thanks{Authors listed in reverse alphabetical order and are equal contributors to the project. Samii is Assistant Professor, Politics Department, New York University, 19 West 14th Street, New York, NY 10012 (email: cds2083@nyu.edu).  Paler is Assistant Professor, University of Pittsburgh, 4600 Wesley W. Posvar Hall, Pittsburgh, PA 15260 (email: lpaler@pitt.edu).  Daly is Assistant Professor, University of Notre Dame, 217 O'Shaughnessy Hall, Notre Dame, IN 46556 (email: sarahdaly@nd.edu). All replication materials are available at the {\it Political Analysis} Dataverse (article url: http://dx.doi.org/10.7910/DVN/QXCFO2).  We thank Carolina Serrano for excellent research assistance in Colombia and the team at {\it Fundaci\'on Ideas para la Paz} for their support in the data collection.   We also thank the Organization of American States {\it Misi\'on de Apoyo al Proceso de Paz} and the {\it Agencia Colombiana para la Reintegraci\'on} for their collaboration. Daly acknowledges funding from the Swedish Foreign Ministry.  For helpful discussions, the authors thank Michael Alvarez, two anonymous {\it Political Analysis} reviewers, Deniz Aksoy, Peter Aronow, Neal Beck, Matthew Blackwell, Drew Dimmery, Ryan Jablonski, Michael Peress, Fredrik Savje, Maya Sen, Teppei Yamamoto, Rodrigo Zarazaga, and seminar participants at APSA, EPSA, ESOC, MIT, MPSA, NYU, and University of Rochester.}}

\date{\today}

\begin{document}
\maketitle

\begin{center}
{\large Forthcoming at {\it Political Analysis}.}
\end{center}

\thispagestyle{empty}

\clearpage

\thispagestyle{empty}
\begin{center}
\doublespacing
{\LARGE Retrospective Causal Inference with Machine Learning Ensembles: An Application to Anti-Recidivism Policies in Colombia}
\end{center}
\vspace{.5in}

\begin{abstract}
We present new methods to estimate causal effects retrospectively from micro data with the assistance of a machine learning ensemble.  This approach overcomes two important limitations in conventional methods like regression modeling or matching: (i) ambiguity about the pertinent retrospective counterfactuals and (ii) potential misspecification, overfitting, and otherwise bias-prone or inefficient use of a large identifying covariate set in the estimation of causal effects.  Our method targets the analysis toward a well defined ``retrospective intervention effect'' (RIE) based on hypothetical population interventions and applies a machine learning ensemble that allows data to guide us, in a controlled fashion, on how to use a large identifying covariate set.  We illustrate with an analysis of policy options for reducing ex-combatant recidivism in Colombia.
\end{abstract}

\noindent Key words: causal inference; machine learning; ensemble methods; sieve estimation.
\vspace{1em}\\
\noindent Word count:  9,756 (including main text, captions, and references).

\clearpage
\doublespacing
\setcounter{page}{1}
\section{Introduction}

Retrospective causal studies are essential in the social sciences but they present acute challenges.  They are essential insofar as for some important causal questions there are often no feasible alternatives to a retrospective analysis. Such situations include studies of rare outcomes or outcomes that take many years to come about, such as violence or institutional changes.  Adequately powered prospective studies, whether in the form of a randomized experiment or not, may take too long and be too logistically difficult to be practical or may prove unethical.  

Retrospective studies present acute challenges because they try to make causal inferences about the effects of policies, exposures, or processes that were beyond the control of analysts. This introduces problems of endogeneity and confounding.  Moreover, generating results that can inform policy requires estimates that are relevant for one's target population, but sources of quasi-random variation (e.g., instrumental variables or discontinuities) may be too specific in the subpopulations to which they apply to meet these needs.  The relevant counterfactual comparisons may not be obvious either. 

We draw on new methods from epidemiology and apply a machine learning approach to overcome these challenges \citep{vanderlaan2011-targeted}.  Our approach makes use of familiar ``conditional independence'' assumptions, however we do so in a way that circumvents problems that arise in simpler uses of regression, matching, or propensity scores \citep{angrist_pischke09}.\footnote{We define conditional independence formally below. The idea is that we can identify the set of confounding factors and ``condition'' on them, thereby removing the confounding covariation.}  Specifically, we use a very large number of covariate control variables and a machine learning ensemble.  Using a very large number of covariates allows us to make conditional independence more believable, which in principle also moves us safely past concerns about ``bias amplification'' \citep{myers-etal2011-bias-amplification}.\footnote{Bias amplification can occur when omitted variables confound estimates of a causal effect and one incorporates additional covariates that purge substantial variation from the treatment variables but fail to purge variation from the outcome variables \citep{pearl2010-bias-ampliyfing}.  Risk of bias amplification depends on the specificities of a given data set. \citet{myers-etal2011-bias-amplification} find empirically that  such biases tend not to be a major concern in epidemiological applications with reasonable sets of control variables.}  But having such a rich covariate set raises questions about how to properly employ the covariates.  We face the daunting task of having to choose from among the vast possibilities for terms (e.g., squared, cubed) or interactions to include in a model.  We use a machine learning ensemble that lets the data guide us, in a controlled fashion, in using an identifying covariate set.  We use a simulation experiment to show how a machine learning ensemble is more robust than conventional methods in extracting identifying variation from irregular functional relationships in a noisy covariate space.

To obtain causal estimates that properly inform realistic policy options, we define our counterfactuals in terms of substantively motivated ``retrospective intervention effects'' for the target population.  The retrospective intervention effect (RIE) establishes a compelling counterfactual comparison that incorporates different types of information than alternative estimands such as the average treatment effect (ATE), average effect of the treatment on the treated (ATT), or the average effect of the treatment on the controls (ATC). (We provide a formal characterization of the differences below.)  Consider an analysis of the effects of employment on criminality.  The RIE compares what actually occurred in the population to a counterfactual where everyone in the population is ensured to be employed.  In contrast, the ATE would estimate how criminality differs when everyone is employed versus when no one is employed, an unrealistic population counterfactual.  The ATT and ATC are less unrealistic than the ATE, in that they compare how things would change were we to intervene on the employment status among those with and without jobs, respectively.  But they cannot speak to the importance of such interventions in the population because they do incorporate pre-intervention levels of employment.  Taking pre-existing rates of employment into account is especially important if one wanted to compare an employment intervention to, say, cognitive behavioral therapy, for reducing overall crime rates.  That said, in some cases estimands other than the RIE may be preferable---it would depend on the goals of the analysis.  The ensemble methods that we apply here could be used for other estimands.

This paper contributes to the political methodology literature on causal inference in two ways.  First, we offer a didactic presentation of how one can apply the power of machine learning ensembles to causal inference and policy analysis problems.  In doing so we demonstrate how causal inference problems are extensions of ensemble prediction problems, something with which political scientists are already somewhat familiar \citep{montgomery-etal2012-ebma}.  Second, we demonstrate the use of hypothetical interventions as a way to target the analysis toward a substantively meaningful counterfactual comparison that yields the ``retrospective intervention effect.'' Our application to retrospective studies extends the existing literature on machine learning for causal inference, which includes work on characterizing heterogenous treatment effects \citep{athey-imbens2015-het-effects, grimmer-etal2014-het-effects, green_kern2012_modeling_het, imai_ratkovic2012_estimating_heterogeneity, imai_strauss2011_optimal_gotv}, locating subpopulations within which conditional ignorability holds \citep{ratkovic2014-svm-balance},  and non-parametrically estimating counterfactual response surfaces \citep{hill2011-bart}.  Third, the high-dimensional propensity score and reweighting methods that we use are readily applicable to other types of reweighting methods, such as for dynamic treatment regimes \citep{blackwell2013-dynamic}.

We begin by establishing the inferential setting, and then we discuss potential perils in standard practice for retrospective studies.  Next, we develop an approach to identification of causal effects based on hypothetical interventions.  Following that, we discuss estimation, practical implementation, and inference.  We apply the methods to an illustrative case study that evaluates policy options for reducing recidivism among ex-combatants in Colombia. A conclusion draws out implications and ideas for further research.

\section{Setting}

Our approach in this paper is based on the innovations of \citet{hubbard-vanderlaan2008-pop-int}, \citet{vanderlaan2011-targeted}, and \citet{young-etal2009-exploratory-epi}, and so we adopt their notation so as to allow readers to refer back to these reference works easily. We start with a target population and then obtain from it a random sample of observations.\footnote{A subsequent section deals with questions associated with unequal probability sampling or cluster sampling.} The observations consist of treatment variables denoted as the vector of random  variables $A=(A_1,...,A_j,...,A_J)'$, covariates denoted as the vector of random variable $W=(W_1,...,W_p,...,W_P)'$, and an outcome variable $Y$.  These observations are defined collectively by the random vector $O=(W, A, Y)'$ that is governed in the target population by some probability distribution, $P_0$.   The task is to estimate the average causal effects of components of $A$ for our target population.  An  arbitrary component of the treatment vector $A$ is labeled as $A_j$, the complement of elements in $A$ is labeled as $A_{-j}$, and the support for $A_j$ denoted as $\mathcal{A}_j$.  

The causal structure is assumed to follow the graph depicted in Figure \ref{fig:ordering} \citep{pearl-2009-causality}.  We have circled the elements of $A$ to highlight our interest in estimating causal effects for the components of that vector.  The causal graph indicates two sources of confounding, originating in $W$ and $U$, with the variable $U$ standing in to characterize any unobserved determinants of the elements of $A$.  The assumptions embedded in this graph indicate that for estimating the effect of $A_j$, confounding originating in $W$ can be blocked by conditioning on $W$, while confounding originating in $U$ can be blocked by conditioning on $A_{-j}$.  An important assumption that this graph encodes is aside from the dependencies due to $U$ and $W$, there are no direct causal relationships between the elements of $A$.  These are substantive assumptions about the causal structure.\footnote{If they are wrong, the analysis will not generally yield unbiased or consistent estimates of causal effects.  In an applied setting, one would want to check robustness of one's estimates to a variety of assumptions about the causal graph. For example, one would want to check to see whether estimates change if one assumes that some elements of $A$ are causally dependent on others.  Under such alternative assumptions, one would set up the analysis in ways that avoid post-treatment bias by including in the set of covariate controls only the elements of $A_{-j}$ that are not causally dependent on $A_j$ \citep{king_zeng06_extreme, rosenbaum84}.  Once that is done, the analysis would proceed as we describe below.  Our primary interest in this paper is to elaborate methods given a causal graph, and so to save space we do not conduct such robustness checks here.}

\begin{figure}[t]
\begin{center}
\includegraphics[width=.5\textwidth]{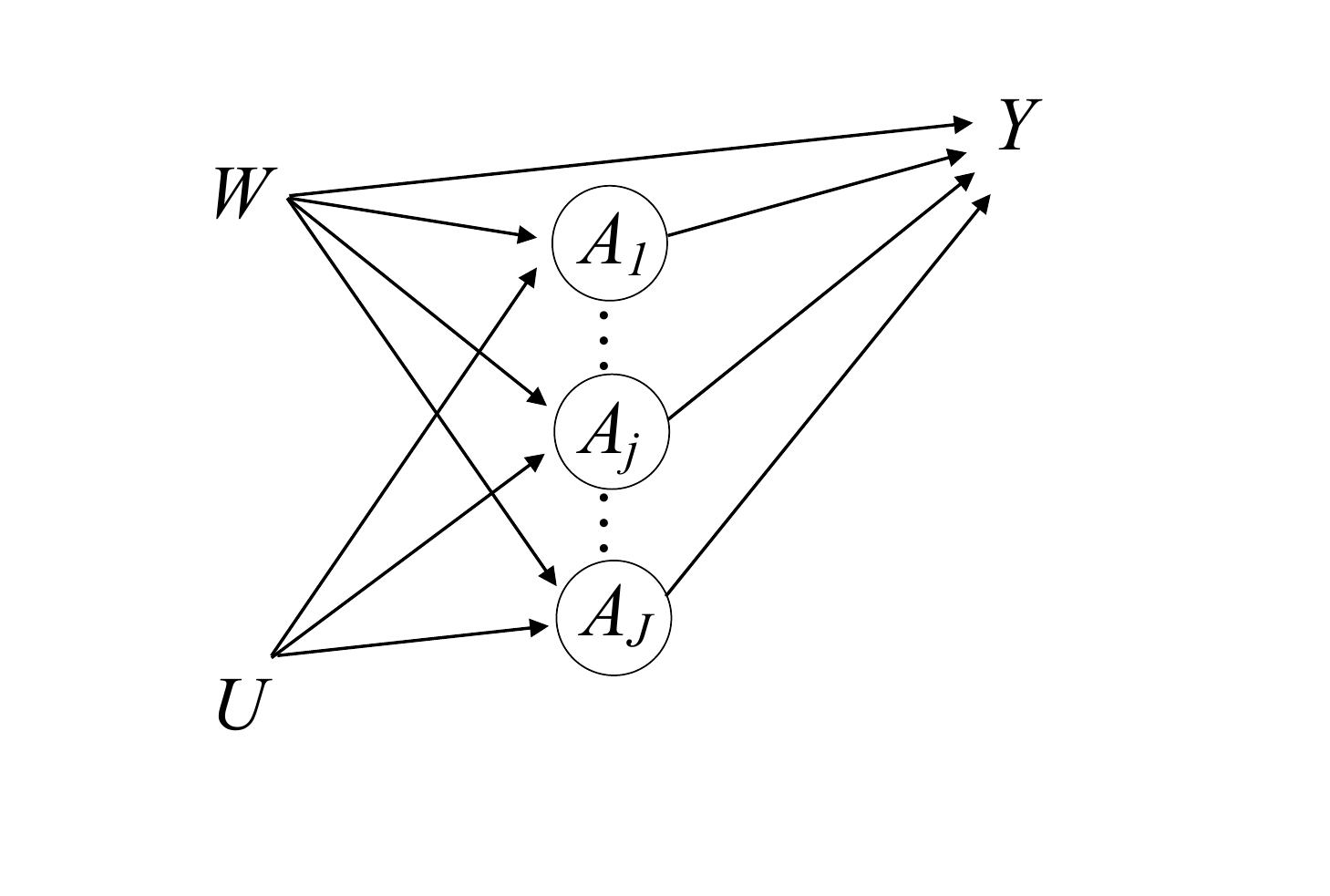}
\end{center}
\caption{\label{fig:ordering} Assumed causal graph, showing confounding in $W$ for the effect of $A_j$ can be blocked by conditioning on $(W, A_{-j})$, and then confounding originating in $U$ that can be blocked by conditioning on $A_{-j}$.}
\end{figure}

Using the ``potential outcomes'' notation to define causal effects \citep{holland86, rubin78, sekhon09_opiates}, we can write the outcome that would be observed if treatments $(A_1,...,A_J)$ were set to $(a_1,...,a_J)$ as follows:
$$
Y(a) = Y(a_1,...,a_J),
$$
with $a \in \prod_{j=1}^J \mathcal{A}_j \equiv \mathcal{A}$.  Thus, potential outcomes depend on the combinations of treatments a unit receives, with these combinations denoted by the vector $a$.  For an arbitrary unit $i$ in our target population, the causal effect of fixing $A_{ji} = a$ versus $A_{ji} = a'$ is defined as,
$$
\tau_{ji}(a,a') = Y_{i}(a, A_{-j}) - Y_{i}(a', A_{-j}),
$$
where the introduction of the $i$ subscripts highlights our focus on possible heterogeneity in these effects across units.  Define $\tau_{j}(a,a') = \E[\tau_{ji}(a,a')]$, the average causal effect with the average taken over the units indexed by $i$.  This target quantity, $\tau_{j}(a,a')$, is non-parametrically identified under the so-called conditional independence assumption (\citealp{angrist_pischke09}, pp. 52-59; \citealp{imai_vandyk04_genpscore}; \citealp{imbens2004-ate}; \citealp{imbens_wooldridge09}):
$$
A_j \independent (Y_{i}(a, A_{-ji}), Y_{i}(a', A_{-ji}))' | (A_{-ji},W)'.
$$
Figure \ref{fig:ordering} implies this assumption (although other graphs could also be drawn under this assumption too).  Here, $A_{-j}$ and $W$ form a conditioning vector that blocks sources of confounding variation (or ``back door paths'',  \citealp{pearl-2009-causality}, pp. 16-18, 78-81) in the relationship between $A_j$ and our potential outcomes, $Y_{i}(a, A_{-ji})$ and $Y_{i}(a', A_{-ji})$.

\section{Perils of standard practice}

Conditional independence of the treatments offers the promise of being able to identify causal effects. But one still faces the challenge of operationalizing conditional independence.  \citet{imbens2004-ate} reviews general approaches rooted in either (i) propensity scores and a focus on the ``assignment mechanism'' that determines the relationship between covariates, $(A_{-j},W)'$, and the causal factor of interest, $A_j$, or (ii) response surface modeling and a focus on outcome data generating processes that relate covariates, $(A_{-j},W)'$, to outcomes, $(Y(a, A_{-j}), Y(a', A_{-j}))'$.  As Imbens shows, accounting for either assignment or response is sufficient for identifying a causal effect under the conditional independence assumption.  Analysts have put forward various arguments for whether it is preferable to emphasize assignment \citep{rosenbaum_rubin83_pscore, rubin2008-design-trumps}, response surfaces \citep{hill2011-bart, pearl2010-bias-ampliyfing}, or a combination of the two in the construction of ``doubly robust'' estimators \citep{bang-robins2005-doubly-robust, robins-rotnizky1995-semipar-eff}.

Regression modeling, the workhorse method in the social sciences, can be variously conceptualized as following either approach.  Following \citet[pp. 52-59]{angrist_pischke09}, suppose effects are homogenous such that $\tau_{ji}(a,a') = \tau_{j}(a,a')$ for all units, and that one defines the conditioning vector $X_i \equiv (A_{-ji},W_i)'$  in a regression model of the form,
$$
Y_{i} = \alpha + \beta A_{ji} + X_i \gamma + \epsilon_i.
$$
We suppose the error term, $\epsilon_i$, equals the ordinary least squares residual from the regression of $Y_{i} - \alpha - \beta A_{ji}$ on $X_i$ when this regression is carried out on the full population for which one wants to make inference.  Then, so long as the control vector specification in $X_i$ is adequate to ensure that the linearity assumption holds---that is  $\E[Y_{i} - \alpha - \beta A_{ji}|X_i] = X_i \gamma$ holds---the ordinary least squares (OLS) estimate of $\beta$ is consistent for the homogenous effect, $\tau_{j}(a,a')$ \citep[pp. 57-59]{angrist_pischke09}.  This is in essence a response modeling approach.  In contrast, \citet{angrist_krueger99_handbook} 
and \citet{aronow_samii2016_whose_effect} 
develop the case where the control function, $X_i \gamma$, models the assignment process.  In this case, the  homogenous effects assumption again implies that the OLS estimator for $\beta$ is consistent for $\tau_{j}(a,a')$.  

These two assumptions---homogenous effects and correct  specification for the control vector, $X$---are unrealistic in many applied settings, making the naive use of linear regression a problematic tool for exploiting conditional independence of the treatment.  Furthermore, it would be heroic to presume that all relevant heterogeneity could be modeled.  The linearity assumption is especially vexing when conditional independence of the treatment requires a large covariate set, as this introduces a bewildering array of possible higher order terms and interactions that one must decide on including or excluding.  If either homogenous effects or correct linear specification fails to hold, causal effects estimated with linear regression may fail to characterize the average causal effects for the target population.  First, even if linearity in $X$ holds but effects are {\it heterogeneous}, then the OLS estimator recovers a distorted estimate of the average causal effect.  The distortions are based on an implicit weighting that linear regression produces based on the conditional variance of $A_j$ (\citealp{angrist_pischke09}, p. 75; \citealp{angrist_krueger99_handbook}; \citealp{aronow_samii2016_whose_effect}).\footnote{While the key results in these papers are developed with respect to ordinary least squares regression, as \citet{aronow_samii2016_whose_effect} show, the very same results apply in the first order to estimates for generalized linear models such as logit, probit, and so on.}  
Second, when the specification based on $X$ is wrong, residual confounding may remain and bias the results.  Beyond these risks of getting it wrong, there is also the question of researcher discretion through which terms in $X$ may be manipulated to produce ``desirable'' results \citep{king_zeng06_extreme}.  

Direct covariate matching is an alternative to regression and it relieves the analyst from some of the modeling burdens necessary with regression \citep{ho_etal07_matching}.  Nevertheless, direct covariate matching becomes difficult when the covariate space is large.  When that is the case, one is forced to apply some method of characterizing distance in the covariate space in order either to identify ``nearest neighbors'' or, in kernel matching, generate kernel-weighted approximations of counterfactual outcomes \citep{imbens_wooldridge09}.  Generally speaking, distance metrics for direct covariate matching convey no optimality criteria with respect to bias minimization.  Matching on propensity scores \citep{rosenbaum_rubin83_pscore} or prognostic scores \citep{hansen08-prognostic} can resolve such dimensionality problems and in a manner that is targeted toward bias minimization, but in practice one is left with the task of determining a specification for the propensity or prognostic scores.  When the covariate space is very large, similar challenges make it difficult to use other ``direct balancing'' methods such as entropy balancing \citep{hainmueller11_entropy_balancing}.

The idea that we pursue is that a machine learning approach might allow us to sift through the information content in a large covariate set to target bias minimization in an efficient manner.  Machine learning methods are distinguished from other statistical methods in their emphasis on ``regularization,'' which is the use of penalties for model complexity (\citealp{bickel-li2006-regularization}; \citealp{hastie-etal2009-elements}, p. 34), as well as processes of tuning models so as to minimize cross-validated prediction error.  Our machine learning ensemble targets prediction error for propensity scores.  By combining regularization and cross-validation, the ensemble is built to wade through the noisy variation in a large covariate set and extract meaningful predictive covariate variation.  Because we are predicting propensity scores, this predictive variation is also variation that provides the basis for causal identification.  As \citet{vanderlaan2011-targeted} show, one could also use machine learning in a response-surface modeling approach. However using propensity scores allows for one round of machine learning that can then be used to estimate effects on a variety of outcomes, whereas a response modeling approach would require a separate machine learning step for each outcome.  \citet{busso_etal14_reweighting_vs_matching} show that when covariate distributions have good overlap over the treatment values, estimation using inverse propensity score weights exhibits favorable efficiency properties.  Below, we use a simulation study to illustrate these points.

\section{Defining retrospective intervention effects}

The first step of our approach is to define coherent causal quantities given that effects are possibly heterogeneous and nonlinear.  We do so through the definition of the ``retrospective intervention effect'' (RIE). %which requires proposing hypothetical population interventions.  The RIE compares what actually occurred in the population to what would have happened had the population been subject to a hypothetical intervention.  The hypothetical intervention may vary treatment by individual (e.g., the intervention might use scores to determine who gets treated and who is left alone). 
%
%Suppose the observed data are sample draws of a random vector, $O = (W,A,Y)$, as defined above, and the causal structure remains as in Figure \ref{fig:ordering}.  For the moment, we also take $O$ to be i.i.d. (e.g., the result of an equal probability sample from some population distribution), although relaxing that assumption does not change the basic identification results.  We will attend to the implications of non-i.i.d. data arising from cluster and non-equal probability sampling below.
Following  \citet{hubbard-vanderlaan2008-pop-int}, we consider hypothetical population interventions on the components of $A$.  Such hypothetical interventions are conceptualized as taking a treatment, say $A_j$, and imagining a manipulation that changes $A_j = a_j$ to $A_j = a_j'$.  Defining hypothetical interventions has two methodological benefits.  First, it allows us to define clear causal estimands under effects that vary not only from unit-to-unit, but also over different values of the underlying causal factors (e.g., non-linear or threshold effects).  Second, we can define potential interventions in a manner that takes into account real-world options and therefore establish estimands that are directly relevant for policy analysis \citep[pp. 54-58]{manski-1995-identification-book}. Different hypothetical interventions can be compared to each other in terms of their costs and estimated effects so as to come up with a ranking of the kinds of manipulations that are most promising from a practical perspective.

Our goal is to estimate, retrospectively, the effects of hypothetical interventions  associated with each component of $A$ on the outcome distribution for the population.  That is, we seek to estimate the difference between what has {\it actually happened} against a counter-factual of {\it what would have happened} had there been an intervention on variable $A_j$. The way that one defines hypothetical interventions depends on the types of practical questions that one wants to answer. Consider an intervention on $A_j$ defined as fixing $A_j = \underline{a}_{j}$ for all members of the population.  If $\underline{a}_{j}$ were the minimum value of $A_j$, for example, then the retrospective intervention effect would be equivalent to what epidemiologists refer to as the ``attributable risk'' \citep[63]{rothman-etal2008-epi}, which measures the average consequence of the observed level of $A_j$ relative to a counterfactual of $A_j$ being kept to its minimum throughout the population.

Another type of hypothetical intervention is one that manipulates values of a  continuous treatment, but does so in a manner that varies depending on individuals' realized values of the treatment variable.  For example, suppose the causal factor of interest is income.  We could define an intervention that ensures that all individuals have some minimum level of income, $\underline{c}$.  Then, we apply this intervention to all individuals, in which case we would be changing the incomes for all individuals with incomes less than $\underline{c}$ to be, counterfactually, $\underline{c}$.  For individuals with incomes above $\underline{c}$, the intervention would have no effect and so their incomes would remain as observed.

For outcome $Y$, define the retrospective intervention effect (RIE) for $A_j$ and intervention value $\underline{a}_j$ as,
$$
\psi_{j} = \underbrace{\E[Y(\underline{a}_j,A_{-j})]}_{\text{counterfactual mean}} - \underbrace{\E[Y]}_{\text{observed mean}},
$$
where $A_{-j}$ refers to elements of $A$ other than $A_j$.  %The RIE differs from the average treatment effects that one commonly sees in the causal inference literature \citep{imbens_wooldridge09} in that it compares a counterfactual defined on the basis of a hypothetical intervention to the observed mean.  
The RIE has a direct relationship to the average effect of the treatment on the treated (ATT) or average effect of the treatment on controls (ATC) depending on the nature of the intervention that one wants to study. To see this, suppose a binary intervention variable, $A_j = 0,1$ and that the intervention of interest is one that sets $A_j=0$ (e.g., it is an intervention that  protects individuals from a harmful exposure). Then,
\begin{align}
\psi_j & = \E[Y(0, A_{-j})] - E[Y] \nonumber \\
& = \left\{ \E[Y(0, A_{-j})|A_j = 0]\Pr[A_j = 0] + \E[Y(0, A_{-j})|A_j = 1]\Pr[A_j = 1] \right\} \nonumber \\
&  \hspace{2em} -  \left\{ \E[Y(0, A_{-j})|A_j = 0]\Pr[A_j = 0] + \E[Y(1, A_{-j})|A_j = 1]\Pr[A_j = 1] \right\}  \nonumber \\
& = \left\{ \E[Y(0, A_{-j})|A_j = 1] - \E[Y(1, A_{-j})|A_j = 1]\right\} \Pr[A_j = 1]. \nonumber
\end{align}
Now note that ATT for $A_i$ is defined as, 
$$
ATT \equiv \E[Y(1, A_{-j})|A_j = 1] - \E[Y(0, A_{-j})|A_j = 1] = -\frac{\psi_j}{\Pr[A_j = 1]}.
$$
For this intervention, the RIE has a close relationship to the ATT.  A similar decomposition would follow for the ATC if we defined the intervention of interest as one that sets $A_i = 0$.  What is important to note here is how the RIE depends on the nature of the intervention that is being considered and how it incorporates information on the proportion of units that would be affected by the intervention.

We set the RIE as our target for a few reasons.  First, it compares a policy-relevant counterfactual to what has actually happened.  It allows us to answer the question of whether it would have been ``worth it'' to have pursued various interventions, using observed reality as a benchmark.  We feel that this provides a very coherent way to assess the policy relevance of different causal factors.  It takes as a starting place considerations of whether a causal factor could be manipulated, to what extent, and at what cost, and then quantifies the effects.  Second, the nature of the comparison limits the number of ``unknowns'' that we need to address in the analysis while still allowing us to address policy relevant questions clearly.  Given our sampling design, the observed outcome mean ($\E[Y]$) is identifiable from our data with no special assumptions.  Our analytical task is merely to characterize the counterfactual mean ($\E[Y(\underline{a}_j,A_{-j})]$). This makes for a more tractable analysis than would be the case, say, of comparing two counterfactual means when estimating an ATE (e.g., comparing two hypothetical interventions against each other). Our approach is consistent with recommendations of \citet[Ch. 3]{manski-1995-identification-book}, who proposes that one should target causal estimands depending on the data at hand, the policy questions one wants to answer, and the treatment regimes that different policies might imply.

\section{Identification and estimation}

The identification of the RIE, $\psi_{j}$, requires the following assumptions.

\begin{assn} 
$A = a$ implies $Y = Y(a)$.
\end{assn} 
\citet{vanderlaan2011-targeted} and \citet{vanderwheele2009-consistency} call this the ``consistency'' assumption,  and it also forms the basis of what \citep{rubin1990-sutva} calls the stable unit treatment value assumption or ``SUTVA.''  It means that when we observe $A=a$ for a unit, we are sure to observe the corresponding potential outcome $Y = Y(a)$ for that unit, and this is true regardless of what we observe in other units.\footnote{This usage of the word ``consistency'' should not be confused with its other meaning with reference to the asymptotic convergence of an estimator to a target parameter.}  This assumption would be violated in situations of ``interference,'' where units' outcomes are affected by the treatment status of other units \citep{cox1958}.  In such cases, one could try to redefine units of analysis to some higher level of aggregation such that Assumption 1 is plausible. 

\begin{assn}
For any $\underline{a}_j$ considered in the analysis,
$$
Y(\underline{a}_j, A_{-j}) \independent A_j|(W, A_{-j}).
$$
\end{assn}
This conditional independence assumption requires that conditioning on $W$ and $A_{-j}$ breaks any dependence between the realized value of the particular exposure, $A_j$, and potential outcomes when $A_j=\underline{a}_j$. The causal graph in Figure \ref{fig:ordering} establishes that this assumption allows for causal identification.  This assumption would be violated if the true data generating process departed from Figure \ref{fig:ordering} in particular ways, including causal relations between the elements of $A$, or the existence of other unmeasured confounders that causally determined $Y$ and elements of $A$.  In such cases, one would either have to limit the analysis to elements of $A$ for which Figure \ref{fig:ordering} is valid, or collect additional data to restore the causal dependence and independence assumptions encoded by Figure \ref{fig:ordering}.

\begin{assn}
For all $\underline{a}_j$ considered in the analysis, $\Pr[A_j = \underline{a}_j|W, A_{-j}] > b$ for some $b > 0$.
\end{assn}
This ``positivity'' or ``covariate overlap'' assumption allows us to construct the counterfactual distribution of potential outcomes under the intervention, $A_j = \underline{a}_j$, using the set of observations for which $A_j = \underline{a}_j$ in the sample \citep{petersen-etal2011-positivity}.  This assumption is necessary to identify the population-level counterfactual and therefore to obtain the population-level RIE.  If it does not hold, then identification would be restricted to the subpopulation with values of $W$ and $A_{-j}$ for which Assumption 3 does hold.

These assumptions above identify the population level counterfactual mean, $\E[Y(\underline{a}_j,A_{-j})]$, as follows:
\begin{align}
\E[Y(\underline{a}_j,A_{-j})] & = \E[\E[Y(\underline{a}_j,A_{-j})|W, A_{-j}]] \nn \\
& = \E[\E[Y(\underline{a}_j,A_{-j})|W, A_{-j}, A_j=\underline{a}_j]] \nn \\
& = \E[\E[Y|W, A_{-j}, A_j=\underline{a}_j]]\nn,
\end{align}
where the last term can be estimated using the observed $Y$ outcomes for units with $A_j = \underline{a}_j$.  The outer expectation is what is key: in constructing this counterfactual population average, one needs to weight the contributions of the $(W, A_{-j})$-specific $Y$ means in a manner that corresponds to the distribution of $(W, A_{-j})$ in the population.  The inverse-propensity score weighted approach that we explain below reweights the subpopulation of units with $A_j = \underline{a}_j$ such that it resembles the target population.

We use this identification result to construct an inverse-propensity score weighted (IPW) estimator of the RIE:
\begin{equation}
\hat \psi^{IPW}_{j} = \frac{1}{N}\sum_{i=1}^n\left( \frac{I(A_{ji}=\underline{a}_j)}{\hat{g}_j(\underline{a}_j|W_i,A_{-ji})}Y_{i}\right) - \bar{Y} \label{eq:rie-ipw}
\end{equation}
where $N$ is the sample size and $\hat{g}_j(\underline{a}_j|W_i,A_{-ji})$ is a consistent estimator for $\Pr[A_j=\underline{a}_j|W_i,A_{-ji}]$.  In essence, we take a weighted average of the outcomes of those units for which $A_j=\underline{a}_j$ without an intervention, where the weighting essentially expands each of these units' outcome contributions so that it proxies for the appropriate share of the population with $A_j \ne \underline{a}_j$. For example, if the intervention is the establishment of the income floor, $\underline{c}$, then the share of the population for which $A_j \ne \underline{a}_j$ is the share with incomes below $\underline{c}$. To construct the counterfactual mean under the income floor intervention, we expansion-weight certain individuals with incomes above $\underline{c}$ to approximate contributions from those with incomes below $\underline{c}$.  The way that we identify individuals to expansion-weight is through their covariate profiles, $(W_,A_{-j})$.  In the supplementary materials we show that under mild conditions on the data, $\hat \psi^{IPW}_{j}$ is consistent for $\psi_{j}$ and we can construct conservative confidence intervals.  In our application below we also account for unequal-probability cluster sampling.

\section{Ensemble methods for propensity scores\label{sec:ensemble}}

We do not typically know the functional form for the propensity score, $g_j(\underline{a}_j|W_i,A_{-ji})$, and so we use a machine learning ensemble method known as ``super learning'' to approximate such knowledge \citep{polley-etal2011-superlearning, vanderlaan-etal2007-superlearner}.  The super learner methodology is very similar to ensemble Bayesian model averaging (EBMA) discussed by \citet{montgomery-etal2012-ebma}.  Both super learning and EBMA compute a weighted average of the output of an ensemble of models, where each model is weighted on the basis of some loss criterion, and loss scores for the members of the ensemble are generated using cross-validation.  Ensemble methods relieve the analyst from having to make arbitrary choices about what estimation method to use and what specifications to fix for a given estimation method.  Rather, the analyst is free to consider a variety of estimation methods (linear regression methods, tree-based methods, etc.). Then, one uses cross-validation to determine the loss (e.g., the mean square prediction error) associated with each method.  Finally, the loss value associated with each method is used to determine the weight given to predictions from each method in the analysis.  Using cross-validated loss helps to minimize risks associated with over-fitting.  

To obtain our super learner ensemble estimate of the propensity score, we first obtain propensity score estimates from a set of candidate estimation algorithms. Then, to construct the ensemble estimate, we take a weighted average of estimates from the candidate algorithms.  The weighting is done in a way that minimizes the expected mean squared error (MSE).  

Formally, we have a set of candidate estimation algorithms indexed by $c=1,...,C$.  For each candidate algorithm we have an estimator, $\hat g_j^{c}(\cdot)$, that we fit to the data from each of the cross-validation splits, which are indexed by $v = 1,...,V$. The cross-validation splits are constructed by randomly partitioning the data into $V$ subsets; then each split consists of an estimation subsample of size $N-(N/V)$ and then a hold out samples of size $N_v = N/V$.  For each candidate algorithm, we fit the model on the estimation subsample to obtain $\hat g_j^{c,v}(\cdot)$, and then we generate predictions to the units in the hold out sample.  From that, the average MSE over the cross validation splits for candidate algorithm $c$ is
\begin{align}
\ell^{c}_j & = \frac{1}{V} \sum_{v=1}^{V}  \frac{1}{N_v}\sum_{i=1}^{N_v} [I(A_{ji} = \underbar{a}_j) - \hat g_j^{c,v}(\underline{a}_j|W_i,A_{-ji})]^2 \nonumber \\
& = \frac{1}{N} \sum_{i=1}^{N}  [I(A_{ji} = \underbar{a}_j) - \hat g_j^{c,v(i)}(\underline{a}_j|W_i,A_{-ji})]^2, \nonumber
\end{align}
where $v(i)$ indexes the cross validation split that contains unit $i$ in the hold-out sample.  The last line shows that each unit  receives a set of predicted values generated by each algorithm from when the unit was in a hold-out sample.  Moving from a single candidate algorithm to the ensemble, we seek the minimum-MSE weighted average of candidate algorithm estimates, which we obtain by solving for the ensemble weights as
\begin{align}
(w_j^{1*},...,w_j^{C*}) & ={\arg\min}_{(w_j^1,...,w_j^C)} \frac{1}{N} \sum_{i=1}^{N}  \left[I(A_{ji} = \underbar{a}_j) - \sum_{c=1}^C w_j^{c}  \hat g_j^{c,v(i)}(\underline{a}_j|W_i,A_{-ji})\right]^2, \nonumber \\ & \hspace{2em} \text{ subject to } \sum_{c=1}^C w_j^c = 1 \text{ and } w_j^c \ge 0 \text{ for all } c. \nonumber
\end{align}
One can obtain the $(w_j^{1*},...,w_j^{C*})$ weights vector by fitting a constrained non-negative least squares regression of the observed $I(A_{ji} = \underbar{a}_j)$ values on the estimated $(\hat g_j^{c,v(i)}(\cdot), ..., \hat g_j^{C,v(i)}(\cdot))$ values \citep{vanderlaan-etal2007-superlearner}.  Given these weights, we fit the candidate algorithms on the complete data,  and the ensemble prediction for the propensity score is given as,
$$
\hat g_j(\underline{a}_j|W_i,A_{-ji}) = \sum_{c=1}^C w^{c*}_j \hat g_j^c(\underline{a}_j|W_i,A_{-ji}).
$$
\citet[Thm. 1]{vanderlaan-etal2007-superlearner} show that under mild regularity conditions, the mean square error of prediction for $\hat g_j(\cdot)$ converges in $N_v$ to the mean square error of the best candidate algorithm.  Therefore the consistency properties of $\hat g_j(\cdot)$ are inherited from the best candidate algorithm. 

The candidate algorithms in our ensemble include the following: (i) logistic regression, (ii) $t$-regularized logistic regression \citep{gelman-etal2008-weakly-informative-prior}, (iii) kernel regularized least squares (KRLS) \citep{hainmueller-hazlett2012-krls}, (iv) Bayesian additive regression trees (BART) \citep{chipman-etal2010}, and (v) $\nu$-support vector machine classification (SVM) \citep[Ch. 12]{chen-etal2005-nu-svm, hastie-etal2009-elements}.  This ensemble includes methods that are demonstrably effective in hunting out nonlinearities (e.g., kernel regularized least squares and support vector classification) and interactions (e.g., Bayesian additive regression trees).\footnote{This ensemble represents the full set of algorithms for which the authors know of research demonstrating  effectiveness in relevant applied settings.  In using the approach developed in this paper, researchers are free to consider other, potentially superior algorithms in their ensemble.}  We use 10 cross validation splits ($V=10$ in our ensemble). \citet{polley-etal2011-superlearning} demonstrate that a 10-fold cross validation super learner using some of these algorithms (they do not include KRLS) performs well in a wide range of data settings,  including in estimating highly irregular and non-monotonic conditional mean functions.

In our illustration below, we use a rich covariate set, and so our ensemble relies primarily on regularized methods that reward sparsity  (that is, they shrink partial effects of covariates to zero) in order to further control over-fitting \citep{bickel-li2006-regularization}.  The importance of such regularization is likely to be important when the covariate set contains large amounts of noise that obscure identifying variation.  The only non-regularized method is logistic regression, which does not reward sparsity but is a method that we include because it remains the workhorse approach to propensity score estimation in political science. This provides a useful benchmark to evaluate gains from the much more computationally complicated algorithms and the ensemble routine overall, since we can view the weight given by the super learner to logistic regression relative to the other methods.  

The kernel regularized least squares, Bayesian additive regression trees, and $\nu$-support vector classification and regression algorithms are based on models that grow in complexity with the data,\footnote{Estimators that grown in complexity like this are known as ``sieve'' estimators \citep{geman-hwang1982-nonparametric-mle}.} although such growth is constrained by regularization parameters. In a manner similar to Taylor approximation, allowing for more complexity helps to ensure improved approximations and  consistency for the predicted mean conditional on the covariates included in the analysis \citep{greenshtein-ritov2004-pred-cons}.  

In our ensemble, we economize on computational costs by using the default rule-of-thumb settings for the regularization parameters that approximate MSE minimization.\footnote{The rule of thumb methods are specific for each algorithm.  See \citet[1364-165]{gelman-etal2008-weakly-informative-prior} for $t$-regularized logistic regression, \citet[pp. 6-7]{hainmueller-hazlett2012-krls} for KRLS, and \citet[269-273]{chipman-etal2010} for BART, and \citet[p. 129]{chalimourda-etal2004} for $\nu$-support vector classification.}  In principle, one could incorporate into the ensemble multiple versions of each algorithm, with each version applying a different regularization parameter, and then construct the cross-validated error-minimizing combination, although this could entail relatively high computational costs.

\section{Simulation study}

We provide evidence on finite sample performance of the ensemble method using a simulation study that illustrates that challenge of extracting meaningful variation in covariate sets as the noise-to-signal ratio increases.\footnote{For replication materials, see \cite{samii-2016-rep-files}.}  We consider a situation where we have observational data on an outcome $Y$, a single binary treatment variable $A=0,1$, and then a vector of covariates, $W$. Our estimand is the RIE for a hypothetical intervention that removes exposure to the treatment --- that is, it sets $A=0$ for everyone.  This corresponds to the case that we explored above in the decomposition that relates the RIE to the ATT.  The outcome $Y$ depends on the value of $A$ and underlying potential outcomes, $(Y(1), Y(0))$---that is, $Y=AY(1) + (1-A)Y(0)$. We set up the simulation so that outcomes and treatment assignment probabilities are a function of only one covariate, $W_1$:
\begin{align}
Y(0) & = W_1 + .5 (W_1 - \min(W_1))^2 + \epsilon_0 \nonumber \\
Y(1) & = W_1 + .75 (W_1 - \min(W_1))^2 + .75 (W_1 - \min(W_1))^3 +\epsilon_1 \nonumber \\
\Pr[A=1|W_1] & = \text{logit}^{-1}\left(-.5 + .75W_1 -.5[W_1 - \text{mean}(W_1)]^2\right),
\end{align}
where $\epsilon_0 \sim \N(0, 5^2)$, $\epsilon_0 \sim \N(0, 10^2)$, $W_1 \sim \N(0,1)$, and $\min(W_1)$ and $\text{mean}(W_1)$ take the minimum and mean, respectively, of the sample draws of $W_1$ prior to producing the $(A, Y(0), Y(1))$ values.\footnote{Using the minimum and mean in this way are simple ways to control how the non-linearity appears in the sample.} Figure \ref{fig:sim-graphs} displays data from an example simulation run.

\begin{figure}[!t]
\label{fig:sim-graphs}
\begin{center}
\includegraphics[width=.75\textwidth]{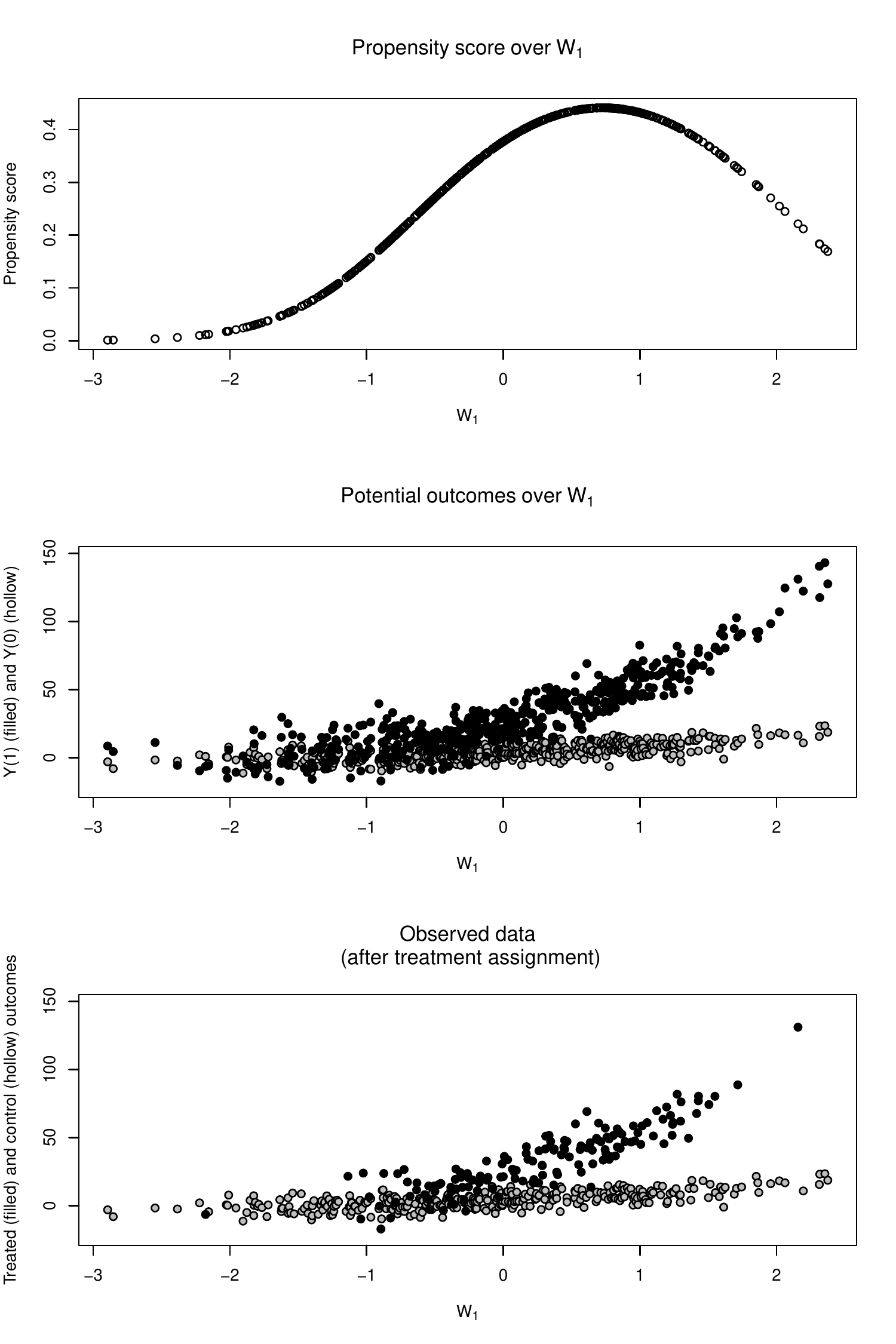}
\end{center}
\caption{Plots from an example simulation run. The top plot shows the expected value of the propensity score over the confounding covariate, $W_1$.  The middle plot shows potential outcomes under treatment (filled) and control (hollow) for the full sample.  The bottom plot shows observed outcomes for those assigned to treatment (solid) and control (hollow).}
\end{figure}%

One goal of the simulation is to show how our machine learning ensemble handles non-linear and non-monotonic functions such as the ones displayed in Figure \ref{fig:sim-graphs}.  Another goal is to study the challenge of working with a high dimensional covariate in which the identifying variation in $W_1$ is obscured by the existence of other covariates with little identifying power.  Therefore, in addition to working with just $W_1$, we add first 5 and then 10 dimensions of pure white noise to the covariate set---that is, 5 and then 10 additional covariates, each drawn  independently as $\N(0,1)$ and thus unrelated to either $Y$ or $A$.  We want to see how well various methods perform in sorting through all of this noise to extract the variation that is meaningful for causal identification.

In our study, we compare four methods to estimate the RIE:
\begin{enumerate}
\item Ordinary least squares (OLS) regression where we regress $Y$ on $W_1$ and then the other covariates, with no interactions or higher order terms, where the coefficient on $A$ serves as our estimate;
\item Naive inverse-propensity score weighting (IPW) where we first estimate the propensity score using a logistic regression of $A$ on $W_1$ and then the other covariates, with no interactions or higher order terms; then, we use the estimated propensity score to construct the RIE estimate; 
\item Mahalanobis distance nearest-neighbor matching with replacement on $W_1$ and the other covariates to construct the counterfactual quantities in the RIE expression and then combining them to compute the RIE; note that the Mahalanobis distance metric corresponds precisely to the joint normality of the covariates;
\item Ensemble IPW which first uses the machine learning ensemble that we described above to estimate the propensity score with $W_1$ and the rest of the covariates, and then uses the estimated propensity score to construct the RIE estimate.  
\end{enumerate}
The data generating process exhibits a combination of issues that complicate causal effect estimation in the real world: (1) effect heterogeneity, (2) non-linearities in the relationship between covariates and potential outcomes, (3) non-linearity in the relationship between covariates and propensity scores, and (4) covariates of differing value for determining assignment and outcomes.  The methods described above handle these issues differently, with consequences for expected bias.  The OLS estimator ignores all four of the issues.  The naive IPW estimator ignores non-linearity in the propensity score (issue 3) and the differing importance of covariates (issue 4).  The matching estimator ignores the differing importance of covariates (issue 4).  The ensemble IPW estimator attends, in principle, to all four issues.

\begin{figure}[!t]
\label{fig:sim-out}
\begin{center}
\includegraphics[width=.75\textwidth]{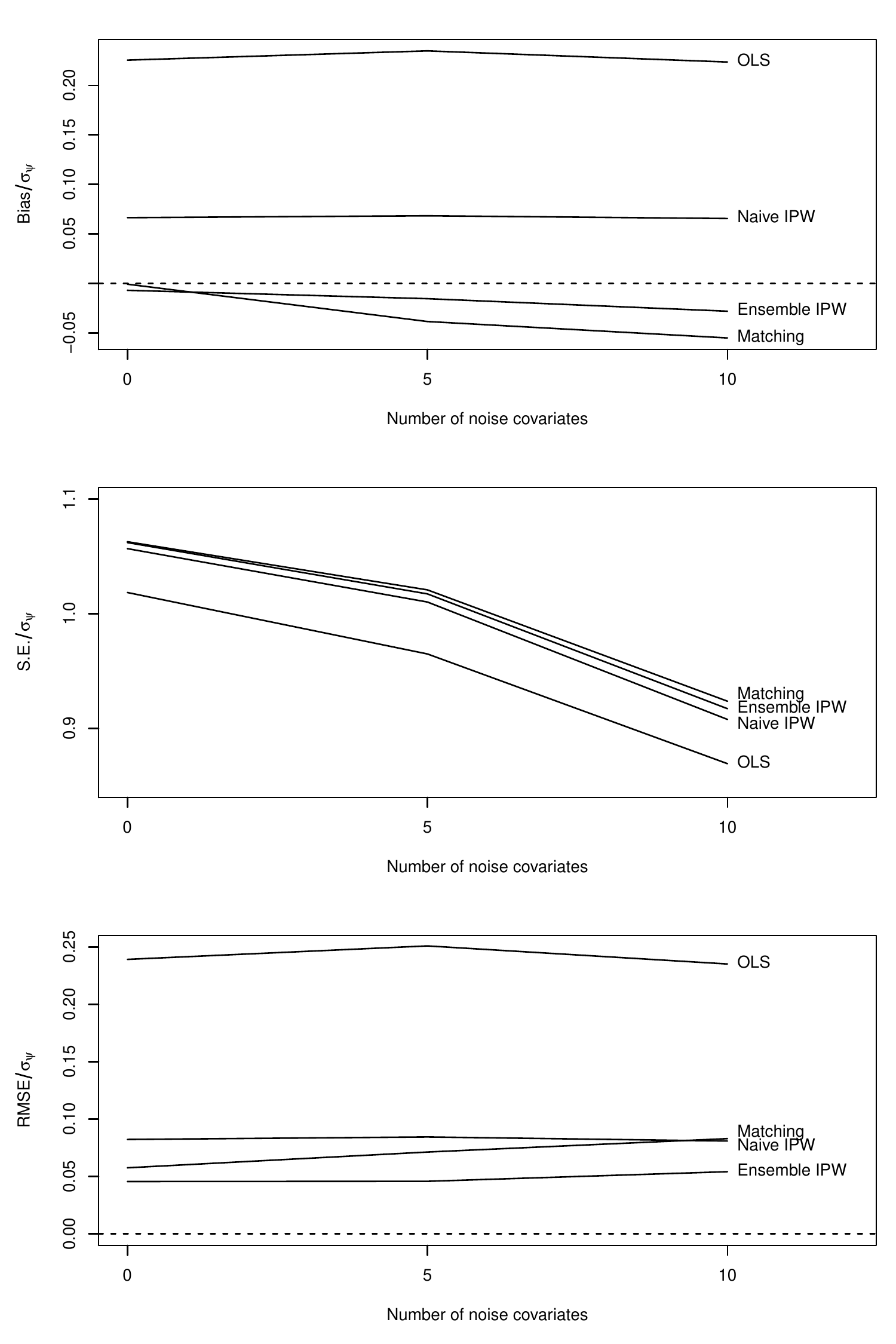}
\end{center}
\caption{Simulation results.  From top to bottom, the graphs show bias, standard error (S.E.), and root mean square error (RMSE) for the different estimators of the RIE from 250 simulation runs as the number of noise covariates increases from 0 to 10.  All results are standardized relative to the standard deviation of the true sample RIE across the simulation runs.}
\end{figure}%

Results from 250 simulation runs with a sample size of 500 are displayed in Figure \ref{fig:sim-out}.\footnote{The ensemble method is fairly slow to run because it employs ten-fold cross-validation, meaning that the simulations also run quite slowly. The results become quite stable after about 150 simulation runs; letting it run for 250 provided some extra security on convergence.}  The graphs display bias, the standard error (S.E.; that is, standard deviation of estimates across the simulation runs), and then root mean square error (RMSE) for 0 noise covariates, 5 noise covariates, and then 10 noise covariates.  These results are all standardized relative to the standard deviation of the true RIE over simulation runs ($\sigma_{\psi} = 3.60$).  In terms of bias, the OLS and naive IPW estimates are clearly poorest, owing to misspecification which for OLS fails to characterize the dramatically increasing effects in $W_1$ and for naive IPW fails to capture the peak in the propensity score.  The increase in noise covariates does not appreciably affect their biases.  With no noise covariates, matching and ensemble IPW are similarly unbiased.  Matching, however, is very sensitive to the increase in noise covariates.  The problem is that as we introduce more covariates, the meaningful differences (in terms of bias minimization) in $W_1$ are overwhelmed by meaningless differences in the other dimensions.  As a result, matches tend to become more random relative to $W_1$, and because of the way the data are distributed in the covariate space, we get negative bias.  The ensemble IPW estimator is much less sensitive to these problems---bias is half the magnitude when we get to 10 covariates.  All methods perform similarly in terms of their standard errors, with matching performing slightly worse than the rest.  RMSE combines these effects, showing that the ensemble IPW estimator is barely affected by higher dimensions of covariate noise.  By the time we get to 10 noise covariates, matching is performing as poorly (in an RMSE sense) as the misspecified naive IPW estimator. The misspecified OLS estimator is far and away the worst.   

The simulation captures the two reasons that we turn to machine learning ensembles.  First, the ensemble is effective in the presence of irregular functional forms and unlike OLS or naive IPW, we do not have to pre-specify these functional forms.  Second, the ensemble is not overwhelmed by noise in the covariate space the way that matching is.  Both estimators are consistent in terms of sample size for the RIE, but they differ in their finite sample performance depending on the amount of covariate noise.  Matching's performance degrades substantially even with 5 or 10 noise covariates.  In the application below, the number of covariates is much higher. 

\section{Application to anti-recidivism policies in Colombia}

Our application is to a study of policy alternatives to reduce recidivism among demobilized paramilitary and guerrilla fighters in Colombia.  By ``recidivism'' we refer to the committing of violent crimes such as murder, assault, extortion, or armed robbery after demobilization.  Such recidivism among former combatants is at the heart of the troubling emergence of ``{\it bandas criminales}'' that have taken charge of narcotics trafficking and threatened social order across Colombia \citep{icg2012-bacrim}. The analysis was meant to shed light on the kinds of interventions that might be most promising for the government to undertake to battle recidivism and increase former militants' reintegration into civilian life.  Of particular interest was how funds might be best allocated across potential interventions targeting economic welfare, security, relations with authorities, psychological health, and relations among excombatants. 

Our data are from a representative multistage sample of 1,158 ex-combatants fielded in 47 Colombian municipalities between November 2012 and May 2013 in collaboration with a Colombian think tank, {\it Fundaci\'on Ideas para la Paz}, the Colombian government department charged with the reintegration of former combatants (the {\it Agencia Colombiana para la Reintegraci\'on}), and the Organization of American States {\it Misi\'on de Apoyo al Proceso de Paz}.  The survey sought to achieve representativeness for the full population of demobilized combatants in Colombia, and included prisoners, ``hard to locate'' ex-combatants, as well as ex-combatants in good standing with the authorities.\footnote{Details on the methods that we used to construct the sample are given in [reference to unpublished work withheld].}  In addition to the survey responses for the individuals in the sample, we obtained a rich set of variables from administrative records of the Colombian attorney general's office ({\it Fiscalia General de la Naci\'on}) and government agencies in charge of ex-combatant reintegration programs.  

\begin{table}[!t]
\begin{center}
\caption{Risk factors and hypothetical interventions}
\label{tab:interventions}
\includegraphics[width=1\textwidth]{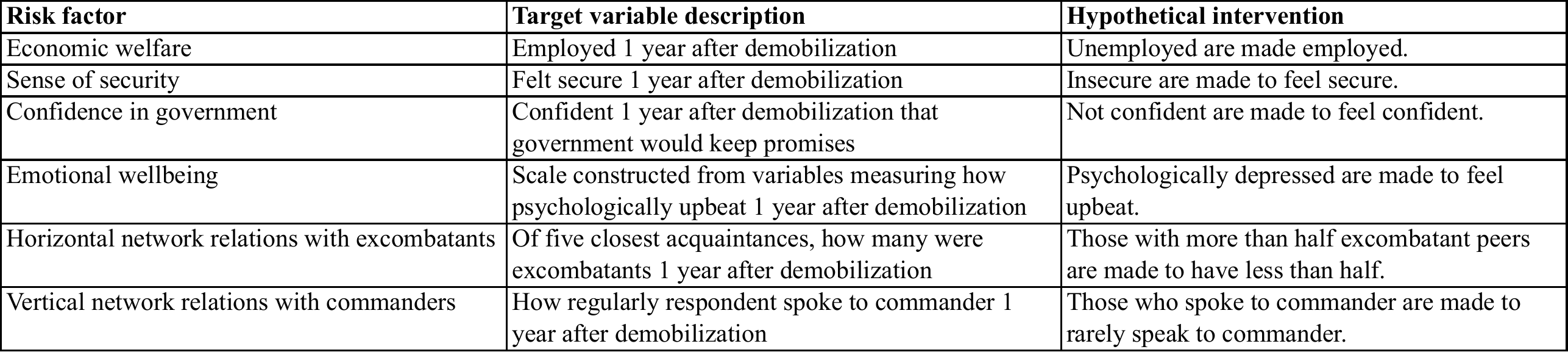}
\end{center}
\end{table}%

The first step of the analysis required that we define a set of risk factors and associated hypothetical interventions.  We defined these in consultation with relevant government authorities, establishing a list of six risk factors and associated hypothetical interventions.  These risk factors, associated variables, and hypothetical interventions are shown in Table 1. %\ref{tab:interventions}.  
In some cases, the nature of the intervention has a clear programmatic interpretation, such as ensuring that the ex-combatant is employed.  In other cases, the nature of the interventions is, admittedly, a bit vague.  For example, ensuring that excombatants have confidence in government at a level that is above 5 in a 10-point scale does not have an immediately actionable interpretation.  What we imagine is that there could be an intervention that generates such a change in attitudes.

Having established the risk factors and interventions, the next step was to establish a covariate set that would allow for credible causal identification.   Our covariate set includes data extracted from the administrative files, measures obtained through the surveys, and then municipality fixed effects, for a total of 114 covariates.  The covariates account for individuals' household, personal, and various contextual circumstances prior to joining their respective armed group, various facets of their experience during their time in the armed group, and the nature of their demobilization and reintegration experience.   To reduce measurement error, we performed a preliminary stage of dimension reduction using a one-factor latent trait analysis that reduced the dimensionality of our covariate set to a set of 23 indices constructed by taking inverse-covariance weighted averages of variables that can reasonably be assumed to capture common traits \citep{obrien84_multiple}. This preliminary step of dimension reduction was pre-specified prior to data collection, which established {\it ex ante} the sets of items that were meant to capture common traits.  The covariate set for our final analysis uses these 23 indices along with a vector of nine demographic traits and dummy variables for the 47 municipalities in which the subjects had demobilized, and so a total of 79 covariates.  

Having defined treatments and covariates, the last step in the data preparation was in defining and measuring outcomes.  Given the sensitive nature of recidivism outcomes, we constructed a ``recidivism vulnerability index.''  The index takes its highest value of 3 for known recidivists and values ranging between 0 and 2 on the basis of the number of clues that our data show suggesting that the respondent is vulnerable to being recidivist.  The index is based on information from attorney general records (history of arrest, charges, and imprisonment), responses to survey questions on crimes committed, responses to survey questions on the extent to which illegal behavior might be condoned, and responses to survey questions on exposure to opportunities in which crimes might be committed.  The latter three were obtained via a self-administered questionnaire answered in private, following best practice in the survey literature for sensitive questions \citep{TourangeauYan05}.  Proven recidivists were those identified as such through the attorney general data or who, in our survey, admitted to being recidivist.

Table \ref{tab:outcome} displays the distribution of the recidivism index in the population and for subpopulations defined on the basis of the intervention variables.  We estimate that the population is fairly evenly distributed over the recidivism index levels.  For the intervention variables, however, we see that in some cases the population is not divided into two equally sized groups.  For example, only 18\% of the population reports that they were without employment one year after demobilization, and so it is only for this 18\% that the hypothetical employment intervention would apply.  Similar circumstances hold for the shares of the population that are depressed, have a large fraction of excombatants in their social networks, or that continued to speak to their commanders.  That being the case, the potential for interventions on these variables to make a major impact is limited to some extent.  Only if the effects were very pronounced would the RIE be of substantial magnitude.  We stress that this is a feature, not a bug, of the RIE approach: it tells us what kinds of policies might have the largest return, all things considered.  This takes into account the possibility that that share of the population for which there is a particular ``problem'' may be quite small.  Table \ref{tab:outcome} also shows differences in the recidivism index values over the intervention variables.  We see pronounced differences for all but the employment variable.  Of course, these comparisons could be biased by confounding.  Our propensity score approach addresses this possibility. 

\begin{table}[!t]
\caption{Recidivism Vulnerability Index Outcome and Intervention Variables ($N$=1,158)}\label{tab:outcome}
\begin{center}
\begin{tabular}{rlm{.25in}m{.25in}m{.25in}m{.25in}cc}
\hline
& Recidivism Index Value$^a$ = & 0  & 1 & 2 & 3 & Mean & (S.E.)\\ 
& & \multicolumn{4}{l}{(\% in each category)} \\
\hline
i.& Unweighted full sample & 27 & 26 & 15 & 33 & 1.53 & (.04)\\ 
& Weighted full sample$^b$ & 28 & 31 & 16 & 23 & 1.38 & (.06) \\
\hline
ii.& Has employment = 0 (18\%) & 25& 37 & 14 & 23  & 1.35 & (.09)\\
& Has employment = 1 (82\%) & 29 & 29 & 16 & 26 & 1.39 & (.07)\\
\hline
iii. & Has security = 0 (39\%) & 23 & 25 & 22 & 20 & 1.60 & (.08)\\
& Has security = 1 (61\%) & 32 & 35 &  12 &  22 & 1.24 & (.07)\\
\hline
iv. & Confidence in govt. = 0 (42\%) & 16 & 29 & 23 & 32 & 1.70 & (.07)\\
& Confidence in govt. = 1 (58\%) & 37 & 33 & 11 & 20 & 1.15 & (.07)\\
\hline
v. & Not depressed = 0 (23\%) & 18 & 24 & 24 & 34& 1.74 & (.14) \\
& Not depressed = 1 (77\%) & 31 & 33 & 14 & 22 & 1.27 & (.06)\\
\hline
vi. & Few excom. peers = 0 (19\%)&21 & 26 & 18 & 35 & 1.67 & (.12)\\
& Few excom. peers = 1 (81\%) & 30 & 32 & 15 & 23& 1.31 & (.06)\\
\hline
vii. & Doesn't speak to commander = 0 (15\%)& 22 & 23 & 17 & 38 & 1.71 & (.13)\\
& Doesn't speak to commander = 1 (85\%) & 29 & 32 & 16 & 24 & 1.32 & (.06)\\
\hline
\multicolumn{6}{l}{\small $^a$0 = ``non-recidivist,'' 3=``proven recidivist.''}\\
\multicolumn{6}{l}{\small $^b$Incorporates survey weights to account for unequal sampling probabilities across}\\
\multicolumn{6}{l}{\small  sample strata.}\\
\multicolumn{6}{l}{\small  i.-vii. Multiple imputation estimates of sample proportions.}\\
\multicolumn{6}{l}{\small  ii.-vii. Estimates use sampling weights.}
\end{tabular}
\end{center}
\label{default}
\end{table}%

The survey data exhibited small amounts of item-level missingness on the various covariates, however such missingness adds up and would have resulted in dropping a non-negligible amount of data.  We used ten-round multiple imputation, with imputations produced via predictive mean matching \citep{royston2004-ice}.  Because of the low item-level missingness, the imputation method is unlikely to make much of a difference in the results, and predictive mean matching is robust to misspecification.  Estimates were constructed from the imputation-completed datasets using the usual combination rules, with point estimates computed as the mean of estimates across imputations and standard errors computed in a manner accounting for both the within- and between-imputation variances \citep[85-89]{little-rubin2002-missingdatabook}.  (Table \ref{tab:workflow} in the supplementary materials shows the workflow.) We fit the components of the ensemble using associated R (v.3.0.3) packages for each of the estimation methods.  These were then fed into the {\it SuperLearner} package for R \citep{polley-vanderlaan2012-superlearner-package} to perform the cross-validation and MSE-based averaging that produced our propensity score estimates.  Then, effects, standard errors, and confidence intervals were constructed based on our survey design with the {\it survey} package in R \citep{lumley2010-survey-book}.

Figure \ref{fig:wgts} shows the weights that the prediction methods received in the ensembles predicting the different intervention propensity scores.   Recall that for each intervention, the weights are obtained from a constrained regression of the observed treatment values on the propensity scores from each prediction method, with the constraint being that coefficients cannot be less than zero and that they must sum to one.  The figure shows the predictive performance of each method.  Logistic regression performs very poorly, receiving zero weight in all ensembles except for the one predicting the propensity score for having few excombatant peer relationships. The weight given to the other methods varies over interventions. BART very regularly receives high weight---indeed, it is the only method that receives positive weight in all interventions.  But BART's weight is surpassed for the employment and security intervention and essentially ties for first place for the excombatant peers intervention.  Understanding why one or another method tends to perform well for different prediction problems could be a useful avenue for further research.  But the main take-away point here is that no single method would have been as reliable as the ensemble for these six prediction problems.  

\begin{figure}[!t]
\includegraphics[width=1\textwidth]{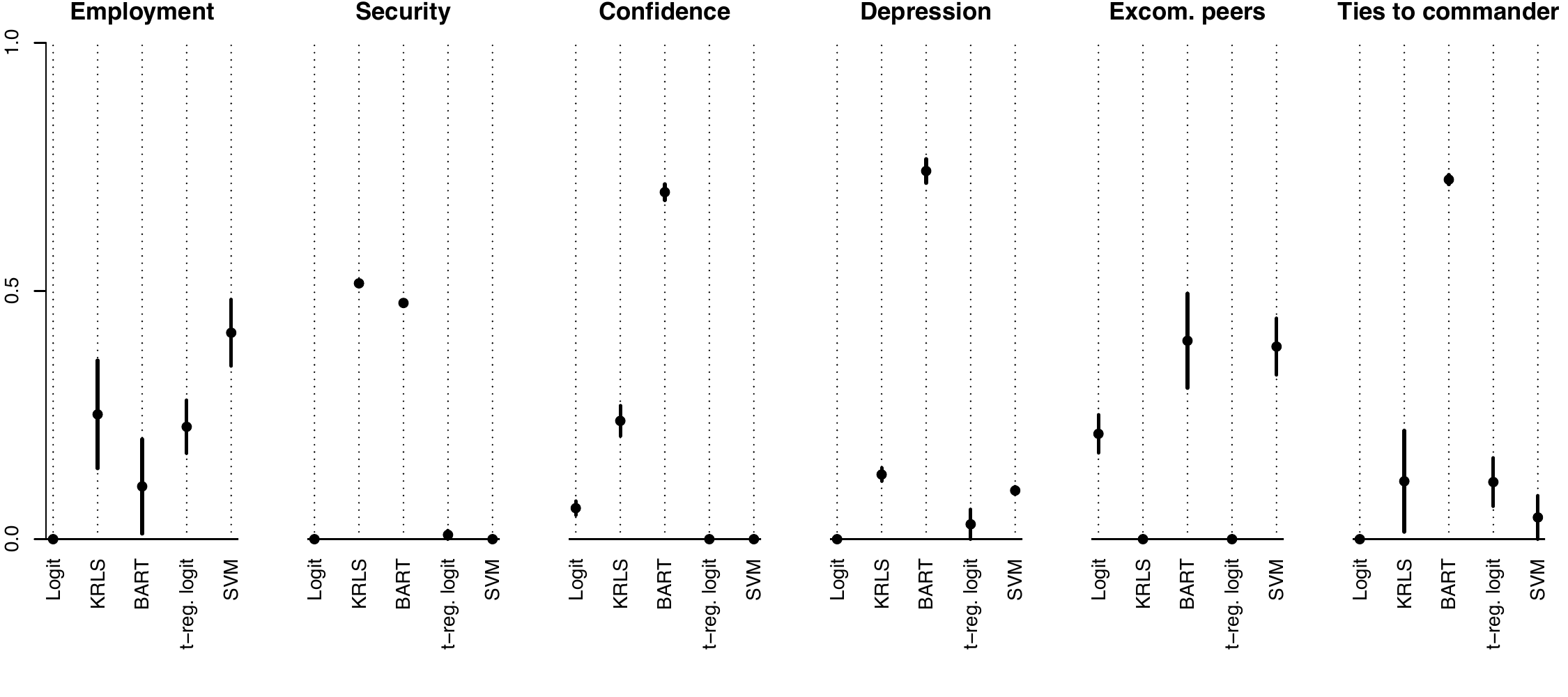}
\caption{Weights applied to propensity score predictions from each prediction method.  The values of weights run along the $y$-axis, and prediction methods run along the $x$-axis. Results are grouped by intervention. The weights are constrained to be no less than zero and to sum to one for each intervention. The black bars show the range of the weights over the 10 imputation runs, and the dots show the means.}
\label{fig:wgts}
\end{figure}

Figures \ref{fig:bal-pre} and \ref{fig:bal-post} demonstrate how the IPW adjustment removed confounding for estimating the RIEs.   Figure \ref{fig:bal-pre} shows the results of a placebo test that estimates pseudo-RIEs using {\it covariates} as outcome variables. Thicker horizontal bars are 90\% confidence intervals and thinner bars are 95\% intervals.  This plot allows us to see how the subpopulations that we use to form the counterfactual approximations differ from the overall population in terms of covariate means. The plot shows a high degree of imbalance.  If we did not reweight by the inverse of $\hat g_j(.)$, these covariate imbalances would confound the RIE estimates.  Figure \ref{fig:bal-post} shows that the IPW adjustment removes these mean differences and the potential for confounding.  A few covariates remain slightly out of balance in terms of their means, but no more than would be expected by chance (as evident from rates at which the confidence intervals fail to cover zero).

Figure \ref{fig:pscores} shows the distribution of propensity scores estimated by the ensemble for each intervention.  The histograms display the propensity scores of units for which $A_{ji}=\underline{a}_j$.  These are the units that are {\it not} subject to intervention and thus provide the outcome data used to construct the counterfactual mean for units that {\it are} subject to the interventions (that is, for whom $A_{ji}\ne \underline{a}_j$).  The propensity scores are clearly bounded away from zero, which is important for estimator stability.  In some cases propensity scores are very close to the value of 1, which is indicative of covariate combinations for which there would be few if any units subject to intervention in expectation.

\begin{figure}[!t]
\includegraphics[width=1\textwidth]{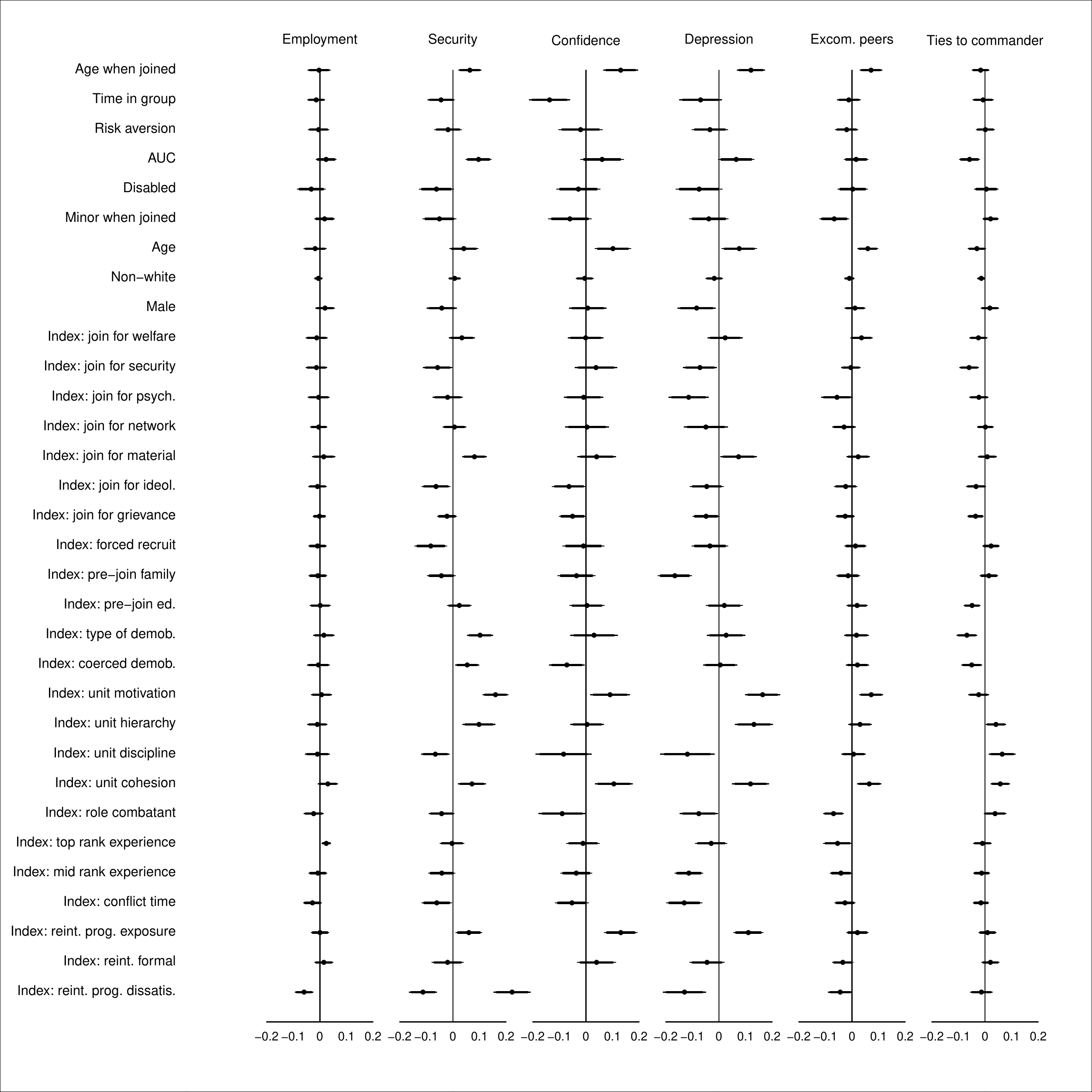}
\caption{Tests of mean balance for covariates and covariate indices in the raw data, prior to IPW adjustment.  Mean differences are shown in standard deviation units.  The horizontal bars passing through the points are the 95\% (thin) and 90\% (thicker) confidence intervals for the mean differences.}
\label{fig:bal-pre}
\end{figure}
\clearpage

\begin{figure}[!t]
\includegraphics[width=1\textwidth]{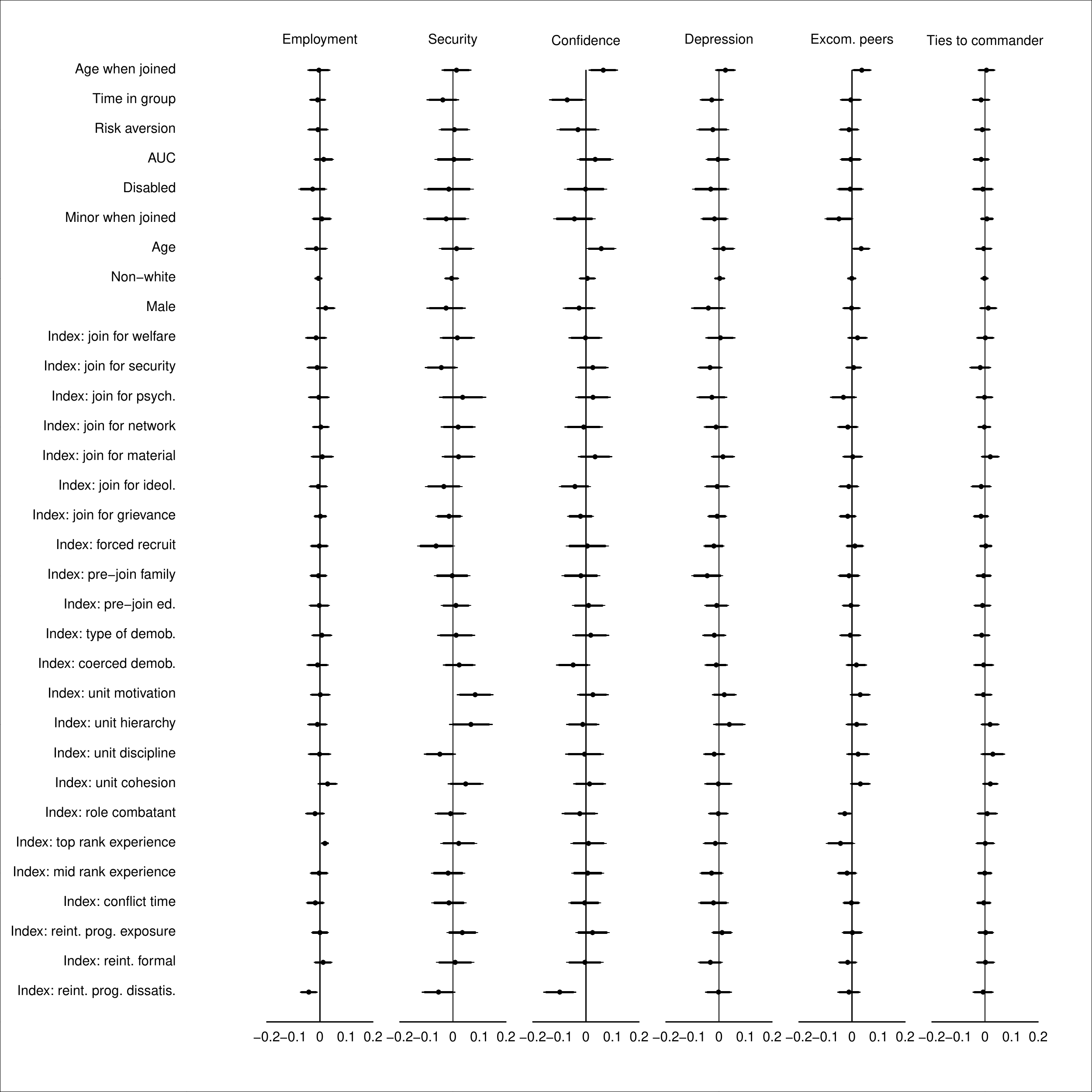}
\caption{Tests of mean balance for covariates and covariate indices with the IPW-adjusted data.   Mean differences are shown in standard deviation units.  The horizontal bars passing through the points are the 95\% (thin) and 90\% (thicker) confidence intervals for the mean differences.}
\label{fig:bal-post}
\end{figure}
\clearpage

\begin{figure}[!t]
\label{fig:pscore-hist}
\includegraphics[width=1\textwidth]{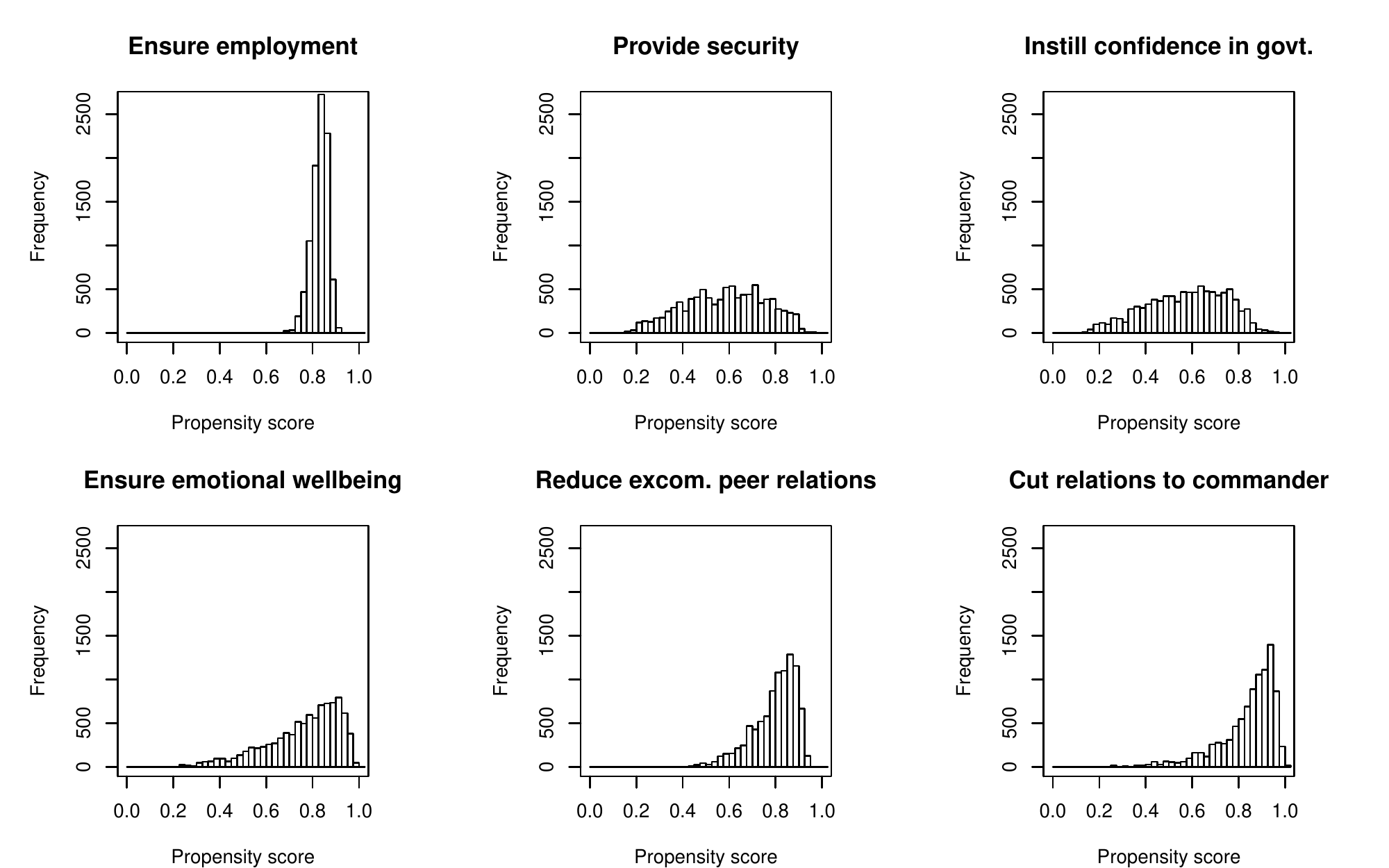}
\caption{Histograms of propensity scores estimated by the machine learning ensemble for each of the interventions.  The histograms show propensity scores for those not subject to the intervention, as they are the units used to construct the counterfactual outcome distribution for those who are subject to the intervention.}
\label{fig:pscores}
\end{figure}

Figure \ref{fig:results} plots RIE estimates and respective 95\% confidence intervals.  The figure displays the estimates based on the ensemble IPW method (black dots) and then estimates from the following comparison estimators: (i) a survey weighted least squares (WLS) regression, where the latter involved regressing the outcome on the hypothetical intervention variables and then on a control vector that included the 23 indices, demographic controls, and municipality fixed effects with no higher order terms of interactions; (ii) a matching estimator that uses one-to-one Mahalanobis distance nearest neighbors matching with replacement to construct the counterfactual mean for those who would be subject to the intervention, with exact matching on municipality indicators; and (iii) a naive IPW estimator that uses propensity scores from a logistic regression of the relevant treatment on a linear specification for the control variables.  

The different estimators yield similar findings in terms of the general direction of the various effects and the way the different interventions are ranked in terms of their beneficial effects (note that negative estimates are beneficial in this context).  Where the real differences lie are in the scale of the point estimates.  The ensemble IPW estimates are generally closer to zero than the WLS estimates, but generally further away from zero than the matching or naive IPW estimates.  In policy analysis, these scale differences are important, because cost effectiveness analyses depend on the point estimates. The WLS estimates seem to exaggerate the effects of different interventions, while the matching and naive IPW estimates seem to heavily understate them.

The RIE estimates are defined in terms of shifts in the population mean.  Recall from Table \ref{tab:outcome} that the population mean in the recidivism index is 1.38 with a standard deviation of 1.14. Thus, the ensemble IPW point estimate for what appears as the most promising intervention---an intervention that instills confidence in government---is estimated to have reduced the average of recidivism tendencies by about 0.3 on the scale of the index or about a quarter of a standard deviation.  That would be a very meaningful effect substantively.  Note that the scale of this effect is a product of both the magnitude of the effect as well as the extent to which such an intervention would require the altering of individuals' treatment values.  For this intervention, Table \ref{tab:outcome} showed that 42 percent of the excombatants had confidence index values below the intervention threshold,\footnote{The percentage is the same with and without the survey weights.} and so it is for them that the intervention would induce a counterfactual change. By contrast, the hypothetical employment, emotional wellbeing, excombatant social networks, and relations to commander interventions would introduce counterfactual changes for smaller fractions of the population.  For these interventions the potential for a substantial RIE would be more limited on this basis. Even as such, we still find statistically and substantively significant RIEs for all but the employment intervention.  This illustrates how the RIE is a population level effect estimate, combining average unit-level effects with information on who should be treated.  This yields a quantity that is immediately informative for policy.

\begin{figure}[!t]
\includegraphics[width=1\textwidth]{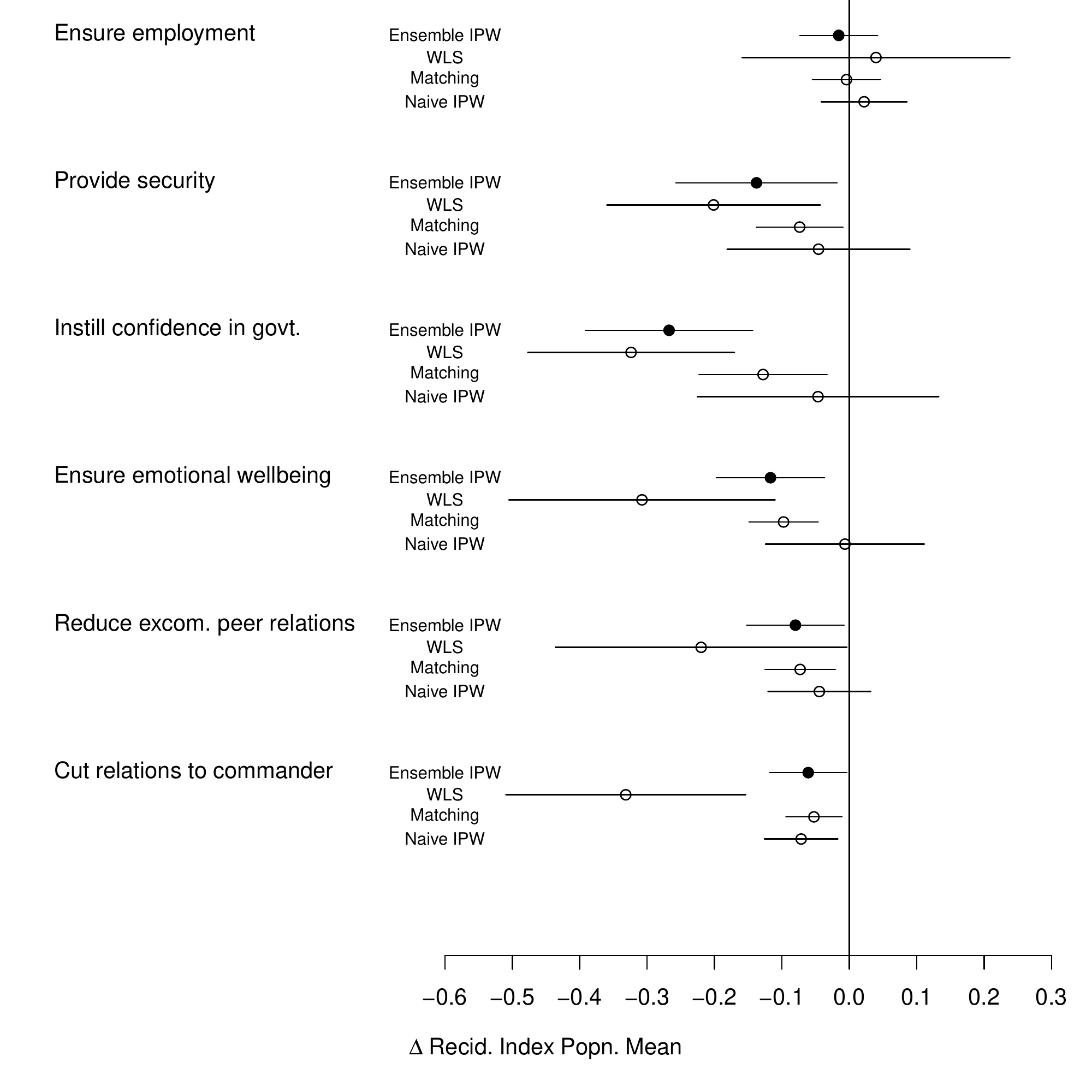}
\caption{Retrospective intervention effect estimates. The vertical line indicates the location of a null effect.  The plot shows point estimates (dots) and 95\% confidence intervals (horizontal bars running through the dots).  ``Ensemble IPW'' = inverse probability weighting RIE estimator, using the ensemble propensity scores; ``WLS'' = weighted least squares estimator based on a regression on the intervention variables and a simple linear specification for the covariates; ``Matching'' = nearest neighbor Mahalanobis distance matching RIE estimator; ``Naive IPW'' = inverse probability weighting RIE estimator, using propensity scores from a logistic regression with a simple linear specification for the covariates.}
\label{fig:results}
\end{figure}

\section{Conclusion}

This paper considers a method for retrospective causal inference that applies machine learning tools to sidestep problems with conventional approaches.  Our approach has two core feature that each confer benefits. First, we define the ``retrospective intervention effect'' (RIE). The RIE uses the device of hypothetical interventions to pin down clear population-level counterfactual comparisons. It also allows us to evaluate, in an easy-to-interpret manner, the relative importance of different risk factors and their effect on a population's outcomes.  Second, we use a machine learning ensemble to use a large number of control variables for causal identification.  A simulation experiment shows the robustness of the ensemble relative to conventional methods in extracting identifying variation from irregular functional relationships in a noisy covariate space. We reweight using predicted propensity scores to approximate the counterfactual defined under hypothetical interventions. This creates a contrast between what actually happened and an estimate of what might have been.  An application to anti-recidivism policies in Colombia led to crisp conclusions about the relative merits of interventions on ex-combatants' confidence in government, social networks, security, and emotions, as compared to other risk factors, such as employment.

The range of problems for which these methods can be applied are constrained by the three identifying assumptions: (i) treatment consistency/SUTVA, (ii) conditional independence, and (iii) positivity.  The machine learning element frees us from the specification assumptions that previous methods also require.  Treatment consistency and SUTVA can be established, in principle, by properly defining interventions and levels of analysis.  For example, if SUTVA is thought to be violated at a low level of aggregation (e.g., individuals), there may be the possibility of satisfying it when we operate at a higher level of aggregation.  Conditional independence can be made more believable if we measure a very large set of covariates.  For methods requiring specification decisions, this in itself creates enormous complications. We overcome this challenge by incorporating regularized methods into our machine learning ensemble. The positivity assumption requires that there exist, in the real world, units that exhibit the diversity in treatment variables and covariates needed to construct a counterfactual approximation for a hypothetical intervention \citep{king_zeng06_extreme}.  This assumption is perhaps the most restrictive.  In some cases it may be satisfied by redefining the target population \citep{crump-etal2006-goal-posts}.  But doing so sacrifices the population-level inference that motivated us in the first place.  As far as we understand, this is an unavoidable limitation for any observational method (and probably experimental too, given practical and ethical limitations on experimental subject pools).

Retrospective studies are a crucial first step in many research programs. They are essential for understanding causes of outcomes that are rare or that emerge only after many years.  This includes outcomes such as violence or institutional change.  Oftentimes the goal is to sort through a number of potential causal factors to identify points of intervention that should be prioritized for experimental or prospective studies.  The conventional approach for doing so in the social sciences relies on multiple regression, for example in conventional case-control studies \citep{king-zeng2002-case-control, korn-graubard1999-health-surveys-book}.  However, the validity of multiple regression estimates depends on homogeneity and model specification assumptions that cannot be defended in many instances, and especially so when the set of control variables is large.  When the number of necessary control variables is large, other estimation methods such as matching, propensity score, or prognostic score methods either require modeling assumptions or make inefficient use of identifying variation.  Under such circumstances, there is reason to be concerned about both bias and the potential for researcher discretion to undermine the validity of the analysis.  The methods presented here demonstrate ways toward more objective and reliable retrospective causal inference. 

The machine learning ensemble allows the researcher to address the bewildering specification challenges that arise when working with a large number of covariates.   Having a large number of covariates at one's disposal allows, in principle, for more plausible causal identification under the conditional independence assumption.  At the same time, it raises concerns about researchers selecting from among the vast number of potential specifications to manipulate results.  The ensemble method can assuage such concerns in that it targets an objective criterion---the minimum expected error of prediction for the propensity score.  This limits researcher degrees of freedom in the specification search, although it does not remove them entirely.  The researcher still selects the algorithms, tuning parameters, loss functions, and preprocessing steps.  Good faith is still required for credible inference.

\clearpage
\bibliographystyle{pa}

%\bibliography{/users/nlsamii/Dropbox/COLOMBIA/WRITING/Colombia}
%\bibliography{/users/cyrussamii/Dropbox/COLOMBIA/WRITING/Colombia}

\clearpage
\singlespace
\begin{center}
{\LARGE Supplementary Materials for\\``Retrospective Causal Inference with Machine Learning Ensembles: An Application to Anti-Recidivism Policies in Colombia''}
\end{center}

\setcounter{page}{1}

\pagestyle{plain}
\appendix

\vspace{1in}
\section{Estimation and inference details}

\begin{prop}[Consistency]
Suppose we have 
\begin{itemize} 
\item a random sample of size $N$ of observations of $O$,
\item bounded support for $O$, 
\item Assumptions 1-3, and 
\item  $\hat{g}_j(\underline{a}_j|W_i,A_{-ji})$ a consistent estimator of $\Pr[A_j = \underline{a}_j|W_i,A_{-ji}]$.
\end{itemize}
Under such conditions, $\hat \psi^{IPW}_{j} -\psi_{j}  \overset{p}{\rightarrow} 0$ as $N \rightarrow \infty$. 
\end{prop}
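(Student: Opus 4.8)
The plan is to show that the feasible estimator in \eqref{eq:rie-ipw} differs only by an asymptotically negligible amount from an \emph{infeasible} version that uses the true propensity score $g_j(\underline{a}_j|W,A_{-j})$ in place of $\hat g_j$, and that this infeasible version obeys a law of large numbers whose limit is exactly $\psi_j$. I would organize the argument around three ingredients: a population moment identity, a law of large numbers, and a plug-in remainder bound.

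First I would establish the population identity
\begin{align}
\E\left[\frac{I(A_j=\underline{a}_j)}{g_j(\underline{a}_j|W,A_{-j})}Y\right] = \E[Y(\underline{a}_j,A_{-j})]. \nonumber
\end{align}
Assumption 1 (consistency) lets me replace $Y$ by $Y(\underline{a}_j,A_{-j})$ on the event $\{A_j=\underline{a}_j\}$, since the indicator vanishes off that event. Conditioning on $(W,A_{-j})$ and invoking Assumption 2, the indicator $I(A_j=\underline{a}_j)$ and the potential outcome $Y(\underline{a}_j,A_{-j})$ are independent given $(W,A_{-j})$, so the conditional expectation factors and the weight cancels against $\E[I(A_j=\underline{a}_j)|W,A_{-j}]=g_j(\underline{a}_j|W,A_{-j})$, leaving $\E[Y(\underline{a}_j,A_{-j})|W,A_{-j}]$. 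Taking the outer expectation recovers the counterfactual mean, reproducing the identification result already derived above. Assumption 3 (positivity) guarantees that the weighted summand is integrable, since the weight is bounded by $1/b$ and $Y$ has bounded support.

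Next I would show the infeasible estimator converges to $\psi_j$. With a random sample, bounded support for $O$, and the weight bounded by $1/b$, the summands $I(A_{ji}=\underline{a}_j)g_{j,i}^{-1}Y_i$ are i.i.d.\ with finite mean and variance, so the weak law of large numbers gives $\frac{1}{N}\sum_i I(A_{ji}=\underline{a}_j)g_{j,i}^{-1}Y_i \arrowp \E[Y(\underline{a}_j,A_{-j})]$ by the identity above; likewise $\bar Y \arrowp \E[Y]$. Subtracting yields convergence of the infeasible IPW estimator to $\psi_j$.

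The hard part will be controlling the plug-in error from using $\hat g_j$ rather than $g_j$. Writing the difference between the feasible and infeasible estimators and applying the triangle inequality, it is bounded by
\begin{align}
\frac{1}{N}\sum_{i=1}^N |Y_i|\,\frac{|g_{j,i}-\hat g_{j,i}|}{g_{j,i}\,\hat g_{j,i}}. \nonumber
\end{align}
Bounded support controls $|Y_i|$ and positivity bounds $g_{j,i}$ below by $b$, but the genuine obstacle is the estimated denominator $\hat g_{j,i}$, which must be kept away from zero. Here I would use that consistency of $\hat g_j$ together with $g_{j,i}>b$ implies $\hat g_{j,i}$ is bounded below by, say, $b/2$ with probability tending to one (appealing, if needed, to a mild uniform or mean-square form of the stated consistency, or to truncation of the weights as is standard); on that event the remainder is bounded by a constant times $\frac{1}{N}\sum_i|g_{j,i}-\hat g_{j,i}|$, which is $o_p(1)$ under the consistency hypothesis. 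A Slutsky argument then combines the $o_p(1)$ remainder with the limit of the infeasible estimator to conclude $\hat\psi^{IPW}_j - \psi_j \arrowp 0$.
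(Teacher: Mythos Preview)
Your proposal is correct and follows essentially the same route as the paper: both decompose $\hat\psi^{IPW}_j$ into an infeasible oracle version $\bar\psi^{IPW}_j$ that uses the true $g_j$, establish that $\bar\psi^{IPW}_j \arrowp \psi_j$ via the identification moment identity and a law of large numbers (the paper phrases this as asymptotic unbiasedness plus vanishing variance, which is the Chebychev form of the same argument), and then argue that the plug-in remainder is $o_p(1)$. The paper dispatches the remainder in one line by invoking Slutsky's theorem and consistency of $\hat g_j$, whereas you work it out explicitly and, to your credit, flag the genuine subtlety the paper glosses over---namely that pointwise consistency of $\hat g_j$ is not by itself enough to keep $\hat g_{j,i}$ uniformly bounded away from zero, so some mild strengthening (uniform or $L_1$ consistency, or weight truncation) is implicitly needed in both arguments.
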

\begin{proof}
By Chebychev's inequality, consistency follows from asymptotic unbiasedness and variance converging to zero for the estimator \citep[Thm. 2.1.1]{lehmann1999-elements}. By random sampling, Slutsky's theorem, consistency for $\hat{g}_j(\underline{a}_j|W_i,A_{-ji})$, and Assumption 1, as $N \rightarrow \infty$, $\hat \psi^{IPW}_{j}$ has the same convergence limit as 
\begin{align}
\bar \psi^{IPW}_{j}  = \frac{1}{N}\sum_{i=1}^N\frac{I(A_{ji}=\underline{a}_j)}{\Pr[A_j = \underline{a}_j|W_i,A_{-ji}]}Y_{i}(\underline{a}_j, A_{-j}) - \E[Y]. \nonumber
\end{align}
Then,
\begin{align}
\E[\bar \psi^{IPW}_{j}] & = \frac{1}{N}\sum_{i=1}^N\E\left[ \frac{\E[I(A_{ji}=\underline{a}_j)|W_i,A_{-ji}]}{\Pr[A_j = \underline{a}_j|W_i,A_{-ji}]}\E[Y_{i}(\underline{a}_j, A_{-j})|W_i,A_{-ji}]\right] - \E[Y] \nonumber\\
& = \E[Y(\underline{a}_j, A_{-j})] - \E[Y] = \psi_j, \nonumber
\end{align}
and so $\E[\hat \psi^{IPW}_{j}- \psi_j] \rightarrow 0$ as $N \rightarrow \infty$, establishing asymptotic unbiasedness.  Next, by consistency for $\hat{g}_j(\underline{a}_j|W_i,A_{-ji})$ and Slutsky's Theorem, $\Var[N \hat \psi^{IPW}_{j}]$ has the same limit as $\Var[N \bar \psi^{IPW}_{j}]$, and by random sampling and bounded support, 
$$
\frac{1}{N^2}\Var[N \bar \psi^{IPW}_{j}] = \frac{1}{N^2}\sum_{i=1}^N \Var\left[\frac{I(A_{ji}=\underline{a}_j)}{\Pr[A_j = \underline{a}_j|W_i,A_{-ji}]}Y_{i}(\underline{a}_j, A_{-j}) \right] \le \frac{c^2}{N}
$$  
for some constant $c$, in which case $\Var[\hat \psi^{IPW}_{j}] \rightarrow 0$ as $N \rightarrow \infty$, establishing that the variance converges to zero.  
\end{proof}

To construct confidence intervals, we rely on well-known results for sieve-type IPW estimators \citep{hirano-etal2003-ipw, hubbard-vanderlaan2008-pop-int}. Define
$$
D_{i,IPW} =  \left( \frac{I(A_{ji}=\underline{a}_j)}{\hat{g}_j(\underline{a}_j|W_i,A_{-ji})} - 1\right) Y_{i},
$$
in which case $\hat \psi^{IPW}_{j} = \frac{1}{N}\sum_{i=1}^N D_{i,IPW}$.

Suppose that $g_j(\underline{a}_j|W_i,A_{-ji})$ parameterizes the true distribution for $A_j$, and $\hat{g}_j(\underline{a}_j|W_i,A_{-ji})$ approaches the maximum likelihood estimate for $g_j(\underline{a}_j|W_i,A_{-ji})$. Then, $\hat \psi^{IPW}_{j,k}$ is asymptotically normal and the following estimator is conservative in expectation for the asymptotic variance:
$$
\hat{V}(\hat \psi^{IPW}_{j,k}) = \frac{v(D_{ki,IPW})}{N},
$$
where the $v(.)$ operator computes the sample variance.  Define $\hat S_{IPW} = \sqrt{\hat{V}(\hat \psi^{IPW}_{j,k})}$.  Then we have the following approximate $100\%*(1-\alpha)$ Wald-type confidence interval for our estimate:
$$
\hat \psi^{IPW}_{j,k} \pm z_{\alpha/2} \hat S_{IPW}.
$$

We can modify the estimation and inference procedure to account for non-i.i.d. data.  We have assumed that $(W,A_{-ji})$ is a sufficient conditioning set for causal identification and that the model for $g_j(.)$ is sufficient for characterizing  counter-factual intervention probabilities conditional on $(W,A_{-ji})$.  For this reason, non-i.i.d. data on $O$ do not require that we change anything about how we go about estimating $\hat g_j$.  However, we will have to account for any systematic differences between our sample and target population in the distribution of $(W,A_{-ji})$ when computing $\hat \psi^{IPW}_{j,k}$.  This estimator is consistent for $\psi^{IPW}_{j,k}$ only if it marginalizes over the $(W,A_{-ji})$ distribution in the population.  The solution is to apply sampling weights that account for sample units' selection probabilities \citep[Ch. 6]{thompson2012-sampling}.  When units' selection probabilities are known exactly based on a sampling design (as is the case in our application), we merely need to modify the expression for $\hat \psi^{IPW}_{j,k}$ to take the form of a survey weighted mean rather than a simple arithmetic mean.  Our standard error and confidence interval estimates apply the usual survey corrections for clustering and stratification in sampling design \citep[Ch. 11-12]{thompson2012-sampling}.

\clearpage
\section{Details on the application}

\begin{table}[!h]
\caption{Risk factors and hypothetical interventions, details}\label{tab:interventions}
\begin{center}
\includegraphics[width=1\textwidth]{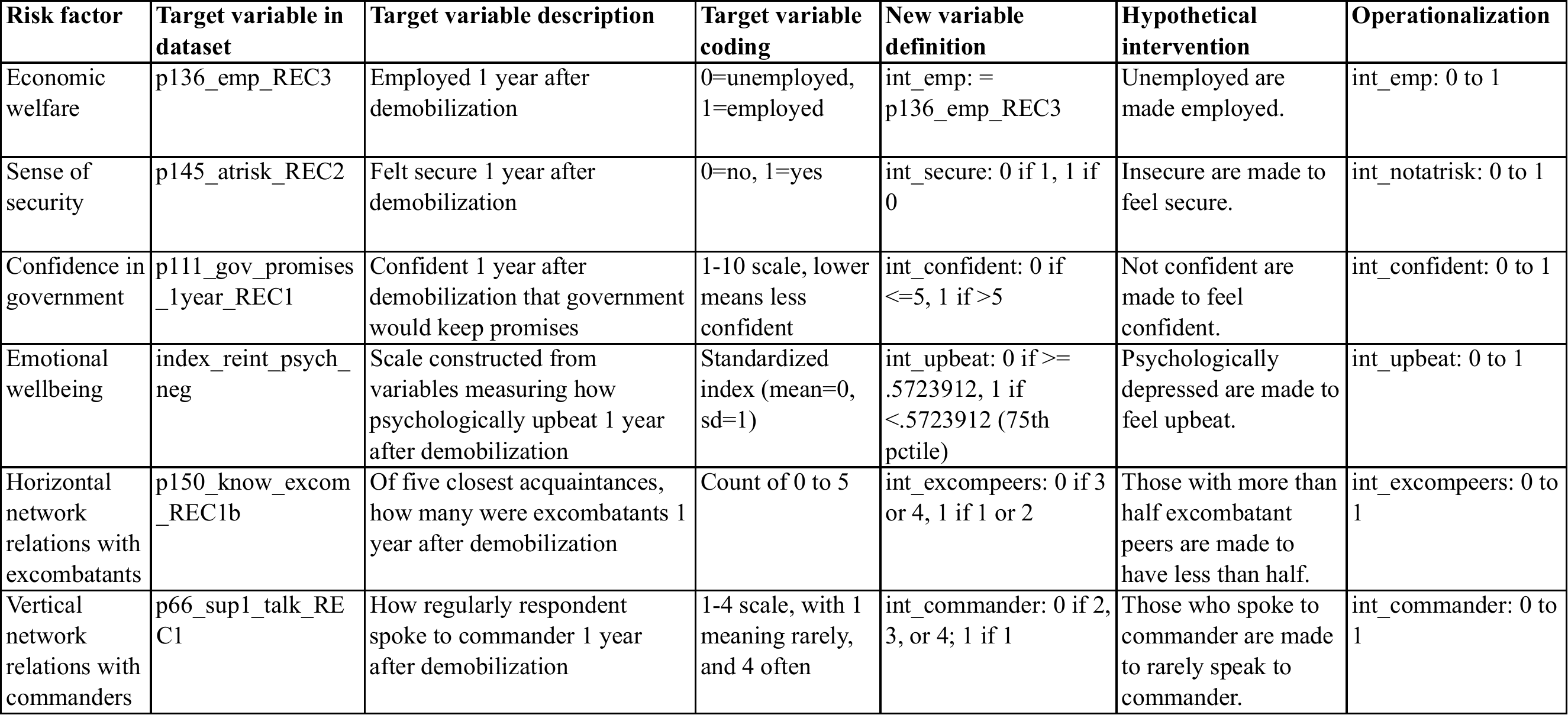}
\end{center}
\label{default}
\end{table}%

\clearpage

\begin{table}[!t]
\caption{Workflow for estimating RIEs with ensemble}
\begin{center}
\begin{tabular}{lp{2.5in}l}
{\bf Step} & {\bf Description} & {\bf Files}\\
\hline
1 & Define hypothetical interventions and construct intervention indicator variables; can be done in any software package. (Done on each imputation-completed dataset.)  &  \specialcell[t]{{\tt Hypothetical-Interventions.xlsx}\\ {\tt COLOMBIA-STEP9-interventions.do}}\\
\hline
2a & Fit propensity score models for each intervention with the ensemble, using cross-validated risk to generate optimal weights for the different model predictions; steps are automated with the SuperLearner functions for R. (Done on each imputation-completed dataset.)  & {\tt interv-pscore-1.R} through {\tt interv-pscore-6.R} \\
\hline 
2b & Generate predictions from propensity score models and attach to dataset.  Done using prediction functions in the SuperLearner package for R. (Done on each imputation-completed dataset.)& {\tt interv-pscore-1.R} through {\tt interv-pscore-6.R} \\
\hline
2c & Produce estimates of intervention effects, incorporating survey sampling adjustments; can be done with any survey estimation software, such as the survey package in R. (Done on each imputation-completed dataset, and then RIE estimates from the imputation-completed datasets were combined to obtain the final estimates.)  & {\tt interv-pscore-1.R} through {\tt interv-pscore-6.R}\\
\hline
3 & Summarize results.  & \specialcell[t]{{\tt int-results-graph.R}\\ {\tt int-results-balance-tables.R}\\ {\tt int-results-performance-metrics.R}}\\
\hline

\end{tabular}
\end{center}

\label{tab:workflow}
\end{table}%

\clearpage
\bibliographystyle{apsr}

\end{document}